\theoremstyle{plain}
\newtheorem{global-theorem}{Theorem}
\newtheorem{theorem}{Theorem}[section]
\newtheorem{lemma}[theorem]{Lemma}
\newtheorem{corollary}[theorem]{Corollary}
\newtheorem{proposition}[theorem]{Proposition}
\theoremstyle{definition}
\DeclareFontFamily{U}{rsf}{}
\DeclareFontShape{U}{rsf}{m}{n}{
  <5> <6> rsfs5 <7> <8> <9> rsfs7 <10->  rsfs10}{}
\DeclareMathAlphabet{\mathscr}{U}{rsf}{m}{n}
\newcommand{\lesone}[6]{
\xymatrix{     
 0 \ar[r] & {#1} \ar[r]  &  {#2} \ar[r]  &  {#3} 
\ar@{->}`r/10pt[d] `[l] `^dl[dlll]  `^dr/10pt[dll]    [dll] \\
 &  {#4} \ar[r] & {#5} \ar[r] & {#6} \ar[r] & 0 }
}
\newcommand{\zz}{{\mathbb Z}}
\newcommand{\nn}{{\mathbb N}}
\newcommand{\ff}{{\mathbb F}}
\newcommand{\Pp}{{\mathcal P}}
\newcommand{\srT}{{\mathscr T}}
\newcommand{\srS}{{\mathscr S}}
\newcommand{\srC}{{\mathscr C}}
\title{Learning proofs for the classification of nilpotent semigroups}
\author{Carlos Simpson}
\date{} 
\begin{document}
\maketitle

\begin{abstract}

Machine learning is applied to find proofs, with smaller or smallest numbers of nodes,
for the classification of 4-nilpotent semigroups. 

\end{abstract}



\section{Introduction}

We are interested in the classification of finite semigroups. Distler \cite{Distler1,Distler2,Distler3} has provided a list of isomorphism classes for sizes 
$n\leq 10$, but at great computational expense. The question we pose here is whether artificial intelligence, in the form of deep learning, 
can learn to do a classification
proof for these objects. 

To be more precise, we are going to look at the question of whether a process designed to learn 
how to do proofs using neural networks can
learn to do ``better'' proofs, as measured by the number of nodes in the proof tree. 

Let's point out right away that the process will not, in its current state, be useful for improving in practical
terms the computational time
for a classification proof. Even though we are able to find proofs with small numbers of nodes, potentially close to 
the minimum, the training time necessary to do that
is significantly bigger than the gain with respect to a reasonable benchmark process. Therefore, this study should be considered
more for what it says about the general capacity of a deep learning process to learn how to do proofs. 

Unsurprisingly, the motivation for this question is the recent phenomenal success obtained by Alpha Go and Alpha Zero \cite{AlphaGo,AlphaZero} 
at guiding complex strategy games. If we think of a mathematical proof as a strategy problem, then it seems logical to suppose
that the same kind of technology could guide the strategy of a proof. 

In turn, this investigation serves as a convenient and fun way to experiment with some basic Deep Learning programming. 
In recent years there have been an increasing number of studies of the application of machine learning to mathematics, starting from
\cite{HeLandscape,CarifioLandscape} and continuing, to cite just a very few, with
\cite{DHL, GaussManin, Wagner}. The application we propose here has the property that it generates its own training data,
providing a self-contained microcosm in which to test things like architecture of neural networks and sampling and training processes.

The problem at hand is that of determining the list of isomorphism classes of semigroups of a given kind. Recall that a 
semigroup consists of a set $A$ and a binary operation 
$$
A\times A \rightarrow A \;\;\mbox{ denoted }\;\; (x,y)\mapsto x\cdot y
$$
subject only to the axiom that it is associative,  $\forall x,y,z \in A, \; x\cdot (y\cdot z) = (x\cdot y) \cdot z$. 
For us, the set $A$ will be finite, typically having from $7$ to $10$ (or maybe $11$) elements. 

We envision a simple format of the classification proof, where at each step we make a {\em cut}, branching according
to the possible hypotheses for the values of a single multiplication $x_i\cdot y_i$. The basic strategy question is which
location $(x_i,y_i)$ to choose at each stage of the proof. Once the possible cuts have been exhausted then we
have a classification of the multiplication tables. 

We don't look at the process of filtering according to isomorphism classes---for the sizes under examination,
that doesn't pose any problem in principle but it would generate an additional programming task. 
Nonetheless the symmetry will be used by starting with
an initial hypothesis about the possible multiplication operations; the set of these hypotheses is filtered by a sieve under
the symmetric group action. Typically, our proof learning process will then concentrate on a single instance 
denoted $\sigma$ of this collection of possible initial conditions. 

The organization of the paper is to first 
describe the computational and learning setup used to try to learn proofs, then next make comments on the choice of network architecture and
sampling and training processes. Then, we show some graphs of the results of the learning process on specific
proof problems. 

For a small initial condition corresponding to certain semigroups of size $7$, we can find in another way the precise lower
bound for the size of the proof, as is discussed in Section \ref{minimality}.
Our learning framework is able to attain proofs of the minimal size. For larger
cases it isn't practically possible to obtain a proven lower bound so we can only show that progress is attained,
leaving it for speculation as to the question of how close we are getting to the theoretical lower bound.

\subsection{Classification framework: $4$-nilpotent semigroups}

We now discuss the framework in somewhat greater detail. We are going to be classifying {\em nilpotent semigroups}.
It is useful to understand the role played by nilpotent 
semigroups in the classification of finite semigroups.

They are representative of the phenomena that lead to large numbers of solutions. In order to understand this,
it is good to look at the $3$-nilpotent case, one of the main constructions of a large number of semigroups. 

Suppose given a filtered nilpotent semigroup $A$ of size $n\geq 3$ with filtration having three steps as follows:
$$
F^1A = A = \{ 0, \ldots , n-1\},\;\;\;\;  F^2 A = \{ 0, 1\}, \;\;\;\;  F^3 A = \{ 0\} .
$$
It means that $x\cdot y \in \{ 0,1\}$, and if $x=0$ or $x= 1$ or $y= 0$ or $y= 1$ then $x\cdot y = 0$. 
Clearly, for any multiplication table satisfying these properties we have $(x\cdot y)\cdot z = 0$ and
$x \cdot (y\cdot z)=0$ for any $x,y,z\in A$. Therefore, any multiplication table satisfying these properties
is automatically associative. To specify the table, we must just specify $x\cdot y \in \{ 0,1\}$ for $x,y\in F^1A - F^2 A = \{ 2,\ldots , n-1\}$. 
There are 
$$
2^{(n-2)^2}
$$
possibilities. For example with $n= 6$ this is $2^{16} = 65536$ possibilities. For $n= 10$ we have $2^{64}$ possibilities.\footnote{By
taking $F^2A = \{ 0,\ldots , (n/2)-1\}$ let's say with $n$ even, we obtain $(n/2)$ to the power of $n^2 / 4$ solutions, 
that is roughly the exponential of $n^2 \log (n) / 4$, and dividing by the symmetric group action doesn't diminish that.}

This example illustrates an important phenomenon, and along the way shows that we can expect the nilpotent cases of all kinds
to occupy a large piece in the general classification. Well-known for some time, this 
was the motivation for Distler's paper \cite{Distler3}. 

In the present study we shall look at the $4$-nilpotent case (as was highlighted to us by D. Alfaya). Namely, assume
given  a filtration $A = F^1A$ and $F^kA=A^k$ with $F^4A = \{ 0\}$. We are furthermore going to assume that the 
``associated-graded dimensions'' $| F^k A - F^{k+1}A|$ are $(a,b,1,1)$, and that $A$ is its own associated-graded,
meaning that if a product reduces the filtration level by more than expected then it is zero. These conditions may be
seen to preserve the essential part of the classification question.

The terminology ``associated-graded'' comes from the interpretation of semigroups with absorbing element $0$ as
${\mathbb F}_1$-algebras, i.e. algebras in the monoidal category of ${\mathbb F}_1$-modules, those just being 
pointed sets with tensor operation the smash product.

The next step, in order to understand both the proof mechanism and the encoding of data to feed to the machine, is to discuss
{\em multiplexing}. This is a very standard procedure. 

To give a simple example, suppose we want to make a neural network to 
predict traffic on a road. It will depend on the day of the week. If we give as input data a number $d \in \{ 0,\ldots , 6\}$ it 
probably isn't
going to work very well since the numerical size of $d$ has nothing {\em a priori} to do with the amount of traffic. 
A much better solution would be to give as input data a vector $v\in \zz ^7$ with 
$$
v= (v_0,\ldots , v_6), \;\;\;\; v_i \in \{ 0,1\} , \;\;\;\; v_0 + \ldots + v_6 = 1.
$$
The last two conditions mean that there is exactly one value that equals $1$, the rest are $0$. We have transformed our integer data
from a {\em quantity} to a {\em location}. With the location data, the machine could make a different calculation for each day of 
the week and is much more likely to find a good answer. 

In our situation, the analogous point is that we don't want to give the numerical data of the values $x\cdot y$ in the multiplication
table. These are numerical values in $\{ 0,\ldots , n-1\}$ but their ordering is only somewhat canonical 
in the nilpotent case, and in the general non-nilpotent case it might be highly indeterminate. Therefore, we {\em multiplex}
the multiplication table into an $n\times n\times n$ tensor $m[x,y,z]$ with the rule
$$
m[x,y,z] = 1 \Leftrightarrow x\cdot y = z, \;\;\;\; m[x,y,z] = 0 \mbox{ otherwise}.
$$
Recall here that the indices $x,y,z$ take values in $0,\ldots , n-1$. 

We'll call the tensor $m$ the {\em mask}. This representation has some nice properties. The first of them is that it allows us
to encode not only the multiplication table itself, but also whole collections of multiplication tables.
A {\em mask} is an $n\times n \times n$ tensor $m$ with entries denoted $m[x,y,z]$, such that the entries are all either
$0$ or $1$. A mask is transformed into a condition about multiplication tables as follows: 
$$
m[x,y,z]= 0 \;\; \mbox{ means } \;\; x\cdot y \neq z
$$
whereas
$$
m[x,y,z] = 1 \;\; \mbox{ means } \;\; x\cdot y \;\; \mbox{ might be } \;\; z.
$$
Given a mask $m$, a table $t$ (that is, an $n\times n$ array whose entries $t[x,y]$ are in $\{ 0,\ldots , n-1\} $)
is said to {\em satisfy} $m$ if, for all $x,y\in \{ 0,\ldots , n-1\}$ putting $z:= t[x,y]$ yields $m[x,y,z]= 1$. 

Geometrically we may view the mask as a subset of a $3$-dimensional grid (the subset of points where the value is $1$)
and a table, viewed as a section from the $2$-dimensional space of the first two coordinates into the $3$-dimensional
space, has to have values that land in the subset in order to satisfy the mask. 

For a given mask $m$ there is therefore a set of associative multiplication tables $t$ that satisfy $m$. 

We could formulate our classification problem in this way: there is a mask $Q({\rm nil},n).m$ of size $n$ corresponding to the 
$4$-nilpotency conditions. We would like to classify associative tables that satisfy this mask. 

That setup is rather general. For the $4$-nilpotent case with given associated-graded dimensions $(a,b,1,1)$ we will
rather look at a collection of masks for the multiplication operations involving the two sets having $a$ and $b$
elements respectively, as discussed in Section \ref{nilpotent}.

\subsection{Cuts and the proof tree}

Let us keep to the more general setting of the previous subsection in order to view the notion of a classification proof by
cuts. A {\em position} of the proof is just some mask that we'll now call $p$. For this mask, there may be a certain
number of {\em available $(x,y)$ locations}. We'll say that $(x,y)$ is available if the number of $z$ such that
$p(x,y,z)=1$ is $>1$. Notice that if, for any $(x,y)$ that number is $0$ then the mask is impossible to realize, that is
to say there are no tables that satisfy it, and if that number is $1$ for all $(x,y)$ then the classification proof is done
at that position: the mask determines the multiplication table. Therefore, at any active position $p$ in the proof, i.e. a position
where there is still some proving to be done, there must be at least one available location. 

If we fix some available location $(x,y)$, then making the {\em cut at $(x,y)$} generates a collection of new proof positions
$p_1,\ldots , p_k$. 
Namely, with $k$ the number of values $z$ such that $p(x,y,z)=1$, 
we take the mask $p$ but replace the column corresponding to $(x,y)$ by, sucessively, the $k$ different columns with
a single $1$ and the rest $0$'s, that correspond to values of $z$ where $p(x,y,z)=1$. 
The new positions need to be {\em processed} to apply the logical implications of the associativity axiom, potentially adding
new $0$'s (see Section \ref{appendix} for the functions that implement this processing). 

This collection of new positions 
generated by the cut is going to be associated to the new nodes underneath the given node, thus creating incrementally the 
{\em proof tree}. Once a cut has been made at a given node, it acquires the label ``passive''. The nodes below it
are labelled ``active'', except for those that are ``done'' or ``impossible'' as described above. 

The root of the proof tree is the initial mask, such as $Q({\rm nil},n).m$, corresponding to our classification problem. 
The classification proof is finished when there are no longer any active nodes. 

Our measure of the size of the proof is to count the number of passive nodes. We decide not to count the
impossible or done nodes, although that would also be a valid choice that could be made, leading to a different notion of minimality
of a proof.

The strategic question for creating the proof is to decide which choice of available cut $(x,y)$ to make starting from 
any given position. The aggregate collection of these choices determines the proof tree and hence the proof. 
We would like to train a deep learning machine to make these decisions.

\subsection{Reinforcement learning for proofs}

We now discuss in a general way the problem of learning to do proofs. We would like to train a machine to predict what is the
best next strategy to use at any stage of a proof. By ``best'' here we mean the strategy that will serve to minimize the
number of steps needed to complete the proof. In our current framework, the proof is always going to finish in a bounded
time, namely after we have done cuts at all the locations. This represents an important distinction from a more general
theorem-proving setup where it might not be clear when or how to finish the proof at all. Having this feature simplifies
the problem for us. 

One main property of the ``theorem-proving'' goal, which is maintained in our situation, is the fact that the number of steps
needed to complete the proof depends on the strategy that we are going to use. We are asking the machine
to predict and minimize a quantity that depends on the configuration of the machine itself. Thus, the question falls into the
domain of {\em reinforcement learning}. 

The main recent advances in this direction are \cite{AlphaGo,AlphaZero}. Of course, the situation of doing a proof is easier
than that of an adversarial game since one doesn't need to consider the possible moves of the adversary. 

There are many online resources available to explain
the general framework that should be used. I found the ``Flappy Bird'' tutorial \cite{FlappyBird} to be particularly helpful.
I would like to mention that that was only one of the literally hundreds of snippets of explanation, code samples, and general
discussions found on the web on a daily basis, which were essential to learning about the materiel and programming
environments used here. Unfortunately,
these were too great in number to be able to record them all as references, and for that  I heartily apologize to and thank the contributors. 

We'll explain in more detail in Section \ref{networks} the utilisation of a pair of neural networks, trained to predict
and minimize the size of proofs. Let us just recall here a few salient aspects of the reinforcement learning process. 

As the machine is basically asked to predict quantities that it has a large hand in determining, one must be careful to avoid
two main potential pitfalls:

\begin{itemize}

\item The machine could fall into a stable situation where very wrong predictions lead to very wrong strategies that 
nonetheless look optimal because of the wrong data that is thereby generated; and

\item As the machine narrows the strategy, we hope, towards a good one, it tends to generate a lot of data on the particular
positions that show up in this strategy, possibly making it look like other positions about which less is known, might be better. 

\end{itemize}

These problems were first explained to me by D. Alfaya, in conversations that occurred well before the current project was envisioned. 

Good exploration and sampling methods need to be chosen in order to mitigate these problems. We note that the second problem
is probably always present to some degree, and it can be seen rather clearly in the results presented graphically at the end
of the paper: when the networks get the node count down to some kind of small value it starts bouncing back a lot. Getting
it to stabilize at the minimal value is much more challenging. 

Mitigating the first problem requires the notion of {\em exploration}: when generating training data, we shouldn't try
only to follow the apparently (by current knowledge of the machine) optimal strategy. Instead, we should generate
training data by following different strategies with various degrees of randomness aiming to explore as much as
possible the full space of possible positions. 

Another question is how far we need to go towards the ends of a proof tree. In principle, with a perfect learning process, it should
be sufficient to just simulate individual proof steps. Indeed, the network is trained to predict the number of remaining nodes,
and this function should have the property that the current number of remaining nodes is the sum of the numbers of nodes below
each of the subsequent positions generated by the best choice of cut, plus $1$ for the upper node itself. We use this method
of generating samples but augment it by running full proofs pruned by dropping certain branches along the way, giving a 
quicker approach to the node values for positions from early in a proof. See Section \ref{samples} for more details. 

\subsection{The neural networks}

Our machine is going to consist of two neural networks $(N,N_2)$, each taking as input a multiplexed proof position $p$. 
The first is designed to give a prediction $N(p)$ of the number of nodes in the proof tree starting from the position $p$. 
The second $N_2(p)$ is an array output consisting of $N_2(p; x,y)$, designed so that $N_2(p; x,y)$ predicts the
sum of the $N(p_i)$ where $p_1,\ldots , p_k$ are the positions generated from $p$ by cutting at $(x,y)$. 

This division of labor is analogous to the value and policy networks of \cite{AlphaGo, AlphaZero}. The utility behind it
is: (1) in doing a large proof $N_2$ provides a fast answer to the question of choosing an optimal cut at each
stage; (2) whereas $N$ provides a fast way of creating training data for $N_2$. 

We program the neural networks using Pytorch. Numerous different options for network architecture were tried. In the
current version of the program, most of the middle layers are convolutional on a $2d$ array 
\cite{DeepLearning,convolutional2, convolutional}. We note that the input tensors
describing a position are multiplexed as described above, and they can have various dimensions. The choice of 
$2d$ was settled on as a space in which to do a reasonable amount of ``thinking'' while not making the number of trainable
parameters explode too much. Grouped convolution is used to further control the number of parameters. 
Fully connected layers are used just before the output, so that the output array takes into account
the full convolution result in its globality rather than just transfering the convolution result-array to the output array of $N_2$. 

See Section 
\ref{architecture} for more details on the network architecture. The architecture we are currently using is something that
can be changed pretty readily. Having already gone through numerous iterations, the process of settling on a good choice is
by no means closed and this is an area for further work and experimentation.

\subsection{Results and questions}

Basically, the process is able to learn to do proofs. This can be seen in the graphs of the training and proof results
given in Section \ref{runs}. Furthermore, for a small first case where $(a,b)=(3,2)$ (and including
some additional filters, see \ref{addfilt}), we are able to calculate in Section \ref{minimality} 
a proven lower bound for the number of nodes in 
the proof. The neural networks are able to find the best proofs, in other words proofs with that minimum number of nodes. 

It should, however, be pointed out that these results are obtained by training the networks on the same proof task that is being measured. 
It is reasonable to ask how well the theorem-proving knowledge generalizes. 

Some experiments were done with training on certain proofs and then testing on others
(see \ref{generalization} for an example), but those results were not all that great.
The proofs on which the networks were not trained, were sometimes done with a small number of nodes, but on the other hand
they sometimes oscillated with rather large node numbers in an apparently unpredictable way, and the oscillation didn't seem to go away
with further training. One might even suspect that after a certain level of training, the machines were learning to memorize the
proof positions on which they were training, to the detriment of success on not-trained-for proofs. 

We could comment that such ``memorization'' might be possible for smaller cases such as $(3,2)$, but for some of the larger
cases that we were able to treat, the number of nodes occurring in a given proof was bigger than the number of trainable parameters
of the model (see \ref{graphOther}) so the results don't seem to be systematically ascribable to simple memorization of positions. 

The question of obtaining machines that are better able to generalize from one proof situation to another, seems like a difficult
question for further research. It doesn't seem clear, for example, what kind of training parameters could be used to favorize that. 

As a variant on the ``generalization'' question, one could ask whether training for certain proofs, then using the resulting
network state as a ``warm start'' for training on different proofs, would produce a noticeable positive effect. There are very preliminary indications
in that direction, but we don't have firm data.

\subsection{Acknowledgements}

This work has been supported by the French government, through the 3IA Côte d’Azur Investments in the Future project managed by the National Research Agency (ANR) with the reference number ANR-19-P3IA-0002.

This project received funding from the European Research Council (ERC) under the European Unions Horizon 2020 research and innovation program 
(Mai Gehrke's DuaLL project, grant agreement 670624).

This research was supported in part by the International Centre for Theoretical Sciences (ICTS) during a visit for participating in the program   
``Moduli of bundles and related structures'' (ICTS/mbrs2020/02). 

This material is based upon work supported by a grant from the Institute for Advanced Study.

\medskip

The program in Pytorch was developed and run with the help of Google Colaboratory. 

\bigskip

I would like to thank the many people whose input into this work has been essential. 
This work fits into a global project with participation by, and/or discussions with: 
David Alfaya, Edouard Balzin, Boris Shminke, Samer Allouch, Najwa Ghannoum, Wesley Fussner, Tom\'a\v{s} Jakl, Mai Gehrke,  
Michael and Daniel Larsen, and
other people. I would like to thank Sorin Dumitrescu for the connection to the 3ia project that started off this research. 
Discussions with Paul-André Melliès, Hugo Herbelin and Philippe de Groote provided important motivation. 
Other valuable comments have come from discussions with Geordie Williamson, Pranav Pandit, Daniel Halpern-Leistner, 
Charles Weibel, Paul Valiant, François-Xavier Dehon, Abdelkrim Aliouche, and 
André Galligo and his working group on AI, in particular talks by Pierre Jammes and 
Mohamed Masmoudi. I would like to thank Jean-Marc Lacroix and Roland Ruelle for their help. I would particularly like to thank
Alexis Galland, Chloé Simpson and Léo Simpson for many discussions about machine learning, optimization, and programming.

\section{Nilpotent semigroups}
\label{nilpotent}

By a {\em ${}^0$semigroup} we mean a semigroup with a distinguished element $0$ having the properties
$0x = x0 = 0$ for all $x$. If it exists, $0$ is unique. 
The case of semigroups (without $0$) may be recovered by the operation of formally adding on a $0$
denoted $A\mapsto A^0:= A\sqcup \{ 0\}$. 

If $A$ is a set we denote by $A^0$ the set $A \sqcup \{ 0\}$ considered as an object of the category of pointed sets.
Given pointed sets $(A,0)$ and $(B,0)$ we denote the {\em product} as 
$$
(A,0)\otimes (B,0):= (A-\{ 0\}) \times (B-\{ 0\}) \sqcup \{ 0\} .
$$
This is the cartesian product in the category of pointed sets. The sum in that category is
$$
(A,0)\vee (B,0):=  (A-\{ 0\}) \sqcup (B-\{ 0\}) \sqcup \{ 0\} .
$$
The category of pointed sets with these operations is sometimes known as the {\em category of $\ff _1$-modules}
and we follow the line of motivation implied by this terminology. In particular, an $\ff _1$-algebra is going to 
be a pointed set $(A,0)$ together with an associative operation
$$
(A,0)\otimes (A,0) \rightarrow (A,0).
$$
This is equivalent to the notion of a ${}^0$semigroup so an alternate name for this structure is ``$\ff _1$-algebra''.

Whereas conceptually we think primarily of pointed sets, in practical terms it is often useful to consider only the
subset of nonzero elements, so these two aspects are blended together in the upcoming discussion. 
In particular, the ``rank'' of an $\ff _1$-module is the number of nonzero elements. 

A finite semigroup $X$ is {\em nilpotent} if it has a $0$ element and there is an $n$ such that $x_1\cdots x_n = 0$ for
any $x_1,\ldots , x_n$. This condition implies that $0\cdot x = x \cdot 0 = 0$, so 
a nilpotent semigroup is also a ${}^0$semigroup and we can say ``nilpotent semigroup''
in place of ``nilpotent ${}^0$semigroup''. 

Suppose $X$ is a nilpotent semigroup. 
We define $X^k$ to be the set of products of length $k$. We have $X^i = \{ 0\}$ for $i \geq n$ (the smallest $n$ above). 
By definition $X^1=X$. We have
$$
X^{i+1}\subset X^i
$$
and these inclusions are strict for $i<n$. That implies that $n\leq |X|$ in the nilpotency condition. 

We introduce the {\em associated-graded semigroup} $Gr(X)$ defined as follows: the underlying set is 
the same, viewed as decomposed into pieces
$$
Gr(X) := X = \{ 0\} \cup \bigcup_{i\geq 1} (X^i - X^{i+1}).
$$
Put $Gr^i(X):= (X^i - X^{i+1})$, and $Gr^i(X)^0:= Gr^i(X)\cup \{ 0\}$. This notion keeps with the $\ff_1$-module philosophy. 

In the current version of this project, we classify semigroups that are already their own associated-gradeds.
This amounts to saying that the product of two elements in $(X^i - X^{i+1})$ and $(X^i - X^{i+1})$
is either in $(X^{i+j} - X^{i+j+1})$ or is equal to zero. For the $4$-nilpotent case, one can recover the
general classification by just lifting all products that are equal to zero, into arbitrary elements of $X^3$. 

\subsection{The $4$-nilpotent case}

We consider the following situation: we have sets $A$ and $B$ of cardinalities denoted $a$ and $b$
respectively, and we look for a multiplication operation
$$
m:A\times A \rightarrow B^0,
$$
recall $B^0:= B\sqcup \{ 0\}$. We require that $B$ be contained in the image
(it isn't necessary to have $0$ contained in the image, that might or might not be the case).

Such an operation generates an equivalence relation on 
$(A\times A \times A)^0$ in the following way. If $m(x,y)=m(x',y')$ then for 
any $z$ we set $(x,y,z)\sim (x',y',z)$ and $(z,x,y)\sim (z,x',y')$. Furthermore, if
$m(x,y)=0$ then we set $(x,y,z)\sim 0$ and $(z,x,y)\sim 0$. 

Let $Q$ be the set of nonzero equivalence classes (it could be empty). We obtain a graded semigroup structure 
on 
$$
X = A^0 \vee B^0 \vee Q^0 = A \sqcup B \sqcup Q \sqcup \{ 0\} .
$$
Suppose given a graded semigroup of the form $A^0\vee B^0 \vee P^0$. Let $m$ be the multiplication operation
from $A\times A$ to $B^0$ and suppose its image contains $B$. 
This yields $Q^0$. There is a unique map $Q^0\rightarrow P^0$ inducing a morphism
of graded semigroups. 

Because of this observation, we would like to classify multiplication maps 
$m:A\times A\rightarrow B^0$ such that the quotient $Q:= A^3 / \sim$ has at least two elements.

Given a multiplication operation satisfying the conditon $|Q| \geq 2$, we can consider a quotient $Q^0 \rightarrow I^0= \{ 0, 1\}$
that sends $Q$ surjectively to $I^0$. 
Roughly speaking, knowing $Q^0$ corresponds to knowing these quotients (there is certainly a form of Stone duality going on here). 
Therefore, we try to classify graded semigroups of the form
$$
A^0 \vee B^0 \vee I^0
$$
which is to say, $4$-nilpotent graded semigroups of size vector $(a,b,1,1)$. 

For $b\leq 2$ and general $a$, we comment that D. Larsen has a sketch of classification leading to a formula for the cases
$(a,1,1,1)$ and $(a,2,1,1)$. 

We are going to assume $b\geq 2$ (the case $b=2$ being nonetheless an interesting one from the proof-learning
perspective), and 
a few further restrictions to the classification setup will be imposed, as discussed in \S \ref{addfilt} below. 

A next question is the choice of ordering of the sets $A$ and $B$. For the computer program, a set
with $a$ elements is $A=\{ 0,\ldots , a-1\}$. Similarly $B=\{ 0,\ldots , b-1\}$. We let the ``zero''
element of $B^0$ correspond to the integer $b$ so $B^0=\{ 0,\ldots , b\}$ containing the subset $B$ indicated above. 
We'll denote this element by $0_B$ in what follows, in other words $0_B$ corresponds to the
integer $b\in \{ 0,\ldots , b\}$.

Our structure is therefore given by the following operations:
$$
\mu :A\times A \rightarrow B^0
$$
$$
\phi : A \times B^0 \rightarrow I^0
$$
and 
$$
\psi : B^0\times A \rightarrow I^0
$$
with the last two satisfying $\phi (x,0_B) = 0$ and $\psi (0_B,x) = 0_I$ for all $x\in A$. They are subject to the condition
that the combined multiplication operation should be associative.

\section{A sieve reduction}

The classification proof setup that we have adopted is to fix the $\phi$ matrix and divide by permutations of 
$A$ and $B$. That is to say, $\srS _a \times \srS_b$ acts on the set of matrices (i.e. $a\times b$ matrices with
boolean entries) and we choose a representative for each orbit by a sieve procedure. This results in a reasonable
number of cases, and the sieve procedure also gives a light property of ordering on the elements of $A$ (resp. $B$)
that seems somewhat relevant. 

Once this matrix is fixed, we search for matrices $\mu$. We get some conditions on the matrix $\psi$
and these are combined into a ternary operation
$$
\tau : A\times A \times A \rightarrow I^0.
$$
The proof is by cuts on the possibilities for the matrix $\mu$, and the leaf of the proof tree is declared to be `done'
when $\mu$ is determined. It doesn't seem to be necessary or particularly useful to make cuts on the possibilities
for the matrices $\psi$ or $\tau$ although this could of course be envisioned.\footnote{We also don't count the potential steps
that might be needed to determine $\psi$ once $\mu$ is fixed, that seems to be mostly negligeable for the sizes
under consideration.}

Another possibility, for absorbing the $\srS _a \times \srS_b$ action, would be to declare that the
values of $\mu$ in $B$ should be lexicographically ordered as a function of $(x,y)\in A\times A$, and then
choose representatives for the initial elements under the $\srS_a$ action. Many attempts in this direction were made,
but in the end, it seemed to be less useful than the current setup. 

We therefore start with a set $\Sigma$ of input data. Each datum in $\Sigma$ consists of a representative for the
equivalence class of a function $\phi : A\times B^0\rightarrow I^0$ (such that $A\times \{ 0_B\}$ maps to $0_I$)
under the action of the group $\srS _a \times \srS_b$. The equivalence class is chosen in a way
that generally puts the $1$'s in this matrix towards lower indices in $A$ and towards the higher indices in $B$
(see for example the instances displayed in the next subsection). 

The other blocks of the input datum, for the right multiplication $\psi : B^0 \times A \rightarrow I$ and
the product $\mu : A\times A \rightarrow B^0$, 
are left free at the root of the proof. 

Here is a table of the sizes $| \Sigma |$, that is to say the numbers of equivalence classes, in terms of $a$ and $b$.
This is the table of \cite{oeis-table}, see the references on that page, and others such as \cite{Harary}.
For $(6,6)$, the value is taken from \cite{oeis-table}. 

\medskip

\noindent
\hspace*{3cm}
\begin{tabular}{|c|c|c|c|c|c|}
\hline
$a\; \backslash \; b$ & $2$ & $3$ & $4$ & $5$ &$ 6$ \\
\hline
$2$ & $7$ &  $13$ & $22$ & $34$ & $50$ \\
\hline
$3$ & $13$ & $36$ & $87$ & $190$ & $386$ \\
\hline
$4$ & $22$ & $87$ & $317$ & $1053$ & $3250$ \\
\hline
$5$ & $34$ &  $190$ & $1053$ & $5624$ & $28576$ \\
\hline
$6$ & $50$ & $386$ & $3250$ & $28576$ & $251610$ \\
\hline
\end{tabular}

\bigskip

We are not very interested in the function that sends everything to $0_I$.
Furthermore we typically don't consider the first few elements of $\Sigma$ that correspond to cases where 
$\phi (x,y) = 0_I$ for almost all values of $x$. These correspond to initial conditions with a large symmetry
group, that are partially absorbed by the symmetry consideration that is explained next. The cases that aren't 
covered by the symmetry consideration should be treated by also specifying the matrix $\psi$; we don't
pursue that at the present time. The remaining values of $\sigma$, constituting most of them, are 
considered as the ``suggested instances'', cf the end of \ref{addfilt}. 

We exploit the symmetry obtained by interchanging the order of multiplication: given a semigroup
$X$ one gets the opposite semigroup $X^o$ with the same set but composition $\ast$ defined 
in terms of the composition $\cdot$ of $X$, by
$$
x\ast y := y\cdot x .
$$
This interchanges the matrices $\phi$ and $\psi$, notably. 
We define an additional filter  ``half-ones'' (cf \ref{addfilt}) to make the following assumption:
\newline
---That the number of nonzero entries of the matrix $\psi$ is $\leq$ the number of nonzero entries
of $\phi$. (The latter number being fixed by the choice of element $\phi \in \Sigma$).

\subsection{Initial data for $(3,2)$}
\label{init32}

For reference we record here the left multiplication matrices for the $13$ initial instances in $\Sigma$ given by the sieve 
for $(a,b)=(3,2)$. Recall that the left multiplication is the product $A\times B^0 \rightarrow I$, but the product with the
zero element (numbered as $2\in B^0$ here) is zero so we only need to include the first two columns. This is a 
$3\times 2$ matrix. The entry $L_{ij}$ is the product $i\cdot j$ for $i\in A$ and $j\in B$ ($i\in \{ 0,1,2\} $ and $j\in \{ 0,1\}$).
$$
L(\sigma = 0) = 
\left[
\begin{array}{ccc}
0 & 0 \\
0 & 0 \\
0 & 0 
\end{array}
\right]
\;\;\;\;
L(\sigma = 1) = 
\left[
\begin{array}{ccc}
0 & 1 \\
0 & 0 \\
0 & 0 
\end{array}
\right]
\;\;\;\;
L(\sigma = 2) = 
\left[
\begin{array}{ccc}
0 & 1 \\
0 & 1 \\
0 & 0 
\end{array}
\right]
$$
$$
L(\sigma = 3) = 
\left[
\begin{array}{ccc}
0 & 1 \\
0 & 1 \\
0 & 1 
\end{array}
\right]
\;\;\;\;
L(\sigma = 4) = 
\left[
\begin{array}{ccc}
1 & 1 \\
0 & 0 \\
0 & 0 
\end{array}
\right]
\;\;\;\;
L(\sigma = 5) = 
\left[
\begin{array}{ccc}
1 & 0 \\
0 & 1 \\
0 & 0 
\end{array}
\right]
$$
$$
L(\sigma = 6) = 
\left[
\begin{array}{ccc}
1 & 1 \\
0 & 1 \\
0 & 0 
\end{array}
\right]
\;\;\;\;
L(\sigma = 7) = 
\left[
\begin{array}{ccc}
1 & 0 \\
0 & 1 \\
0 & 1 
\end{array}
\right]
\;\;\;\;
L(\sigma = 8) = 
\left[
\begin{array}{ccc}
1 & 1 \\
0 & 1 \\
0 & 1 
\end{array}
\right]
$$
$$
L(\sigma = 9) = 
\left[
\begin{array}{ccc}
1 & 1 \\
1 & 1 \\
0 & 0 
\end{array}
\right]
\;\;\;\;
L(\sigma = 10) = 
\left[
\begin{array}{ccc}
1 & 1 \\
1 & 0 \\
0 & 1 
\end{array}
\right]
$$
$$
L(\sigma = 11) = 
\left[
\begin{array}{ccc}
1 & 1 \\
1 & 1 \\
0 & 1 
\end{array}
\right]
\;\;\;\;
L(\sigma = 12) = 
\left[
\begin{array}{ccc}
1 & 1 \\
1 & 1 \\
1 & 1 
\end{array}
\right]
$$

\section{The classification task}
\label{task}

Given three sets $X,Y,Z$, 
a {\em mask} for a function $X\times Y \rightarrow Z$ is a boolean tensor $m:X\times Y \times Z \rightarrow \{ 0, 1\}$. 
A function $f:X\times Y \rightarrow Z$ is covered by $m$, if $m(x,y,f(x,y)) = 1$ for all $x,y$. 

The statistic of $m$ denoted ${\rm stat}(m)$ is the function $X\times Y\rightarrow \nn$ sending $(x,y)$ to the
number of $z$ with $m(x,y,z)=1$, i.e. it is the sum of $m$ along the $Z$ axis. 

If ${\rm stat}(m)(x,y) = 0$ for any pair $(x,y)$ then there doesn't exist a covered function. Thus, we say that $m$
is {\em possible} if ${\rm stat}(m) > 0$ at all $x,y$. 

If ${\rm stat}(m)(x,y) = 1$ it means that there is a single value $z$ such that $m(x,y,z)=1$. This means that if
$f$ is a covered function, we know $f(x,y)=z$. 

We say that the mask is {\em done} if ${\rm stat}(m)(x,y) = 1$ for all $x,y$. In this case it determines a unique function $f$.

Given masks $m,m'$ we say that $m\subset m'$ if $m(x,y,z) = 1 \Rightarrow m'(x,y,z) = 1$. 

We now get back to our classification task. A {\em position} is a quadruple of masks $p=(m,l,r,t)$ where 
$m$ is a mask for the function $\mu : A\times A \rightarrow B^0$, $l$ is a mask for $\phi$, $r$ is a mask for $\psi$,
and $t$ is a mask for $\tau : A\times A \times A \rightarrow I^0$ (a mask for a ternary function being defined analogously). 

We say that a position $p$ is a subset of another position $p'$ if $m\subset m'$, $l\subset l'$, $r\subset r'$ and $t\subset t'$. 

Given a point $\phi\in \Sigma$, the mask $l$ is required to be equal to the done mask determined by this function. 

Let $\Pp$ be the set of positions and let $\Pp _{\Sigma n'}$ be the set of positions corresponding to 
any subset $\Sigma ' \subset \Sigma$. Let $\Pp _{\phi}= \Pp _{\{ \phi \} }$ be the set of positions
corresponding to a point $\phi \in \Sigma$. 

A position $(m,l,r,t)\in \Pp$ is {\em impossible} if any of the masks $m,l,r,t$ are not possible i.e. have a point where
the statistics are $0$. 

A position is {\em realized} if there is a collection of functions covered by the masks that form a graded nilpotent
semigroup. We may also impose other conditions on the realization such as discussed above. 

We'll define a function
$$
{\rm process} : \Pp\rightarrow \Pp 
$$
such that ${\rm process}(p)\subset p$, and such any realization of $p$ is also a realization of ${\rm process}(p)$.
The process function is going to implement some basic steps of deductions from the associativity condition. 
The functions going into ${\rm process}$ will be shown in Section \ref{appendix}.

The function is repeated until it makes no further changes, so that ${\rm process}({\rm process}(p)) = {\rm process}(p)$.

We also define a function
$$
{\rm filter}: \Pp \rightarrow \{ {\rm active}, {\rm done}, {\rm impossible}\}
$$
by saying that ${\rm filter}(p)$ is impossible if:
\newline
---any of the masks in $p$ is not possible (i.e. there is a column containing all $False$'s);
\newline
---in \S \ref{addfilt} we introduce two additional filters that could also be imposed. 

We say that ${\rm filter}(p)$ is done if the mask $m$ is done. Note here that we aren't necessarily requiring $r$ or $t$ to be done
($l$ is automatically done since it corresponds to the function $\phi$). 

We say that ${\rm filter}(p)$ if it is neither impossible or done.

The initial position corresponding to $\phi \in \Sigma$ consists of setting the mask $l$ to be the done mask corresponding to the function 
$\phi$, and setting $m$, $r$ and $t$ to be full masks i.e. ones all of whose values are $1$. 

The goal is to classify the done positions that are subsets of the initial position. 

We do a proof by {\em cuts}. A cut at a position $p$ corresponds to a choice of $(x,y)\in A\times A$, leading to the
position leaves $p_1,\ldots , p_k$ that correspond to choices of $z_1,\ldots , z_k$ such that $m(x,y,z_i)=1$. 
In each position $p_i$ the column $(x,y)$ of $m$ is replaced by a column with a unique $1$ at position $z_i$,
the rest of $m$ remains unchanged. The generated positions are then processed. 

Clearly we only want to do this if ${\rm stat}(m)(x,y) \geq 2$. Such a choice is available any time ${\rm filter}(p)$ is active. 

A partial classification proof consists of making a series of cuts, leading to a proof tree (see below).
Each leaf is filtered as active, done or impossible. The proof is complete when all the leaves are filtered as
done or impossible. The data that is collected is the set of done position at leaves of the tree. These are the 
done positions subsets of the initial position.

\subsection{Additional filters}
\label{addfilt}

In view of the size of certain proof trees that occur, it has shown to be useful to introduce some additional filters.
They should be considered as acceptable in view of the classification problem. 

One filter that we call the ``profile filter'' asks that there shouldn't be two elements that have the same multiplication
profile, i.e. that provide the same answers for all multiplications with other elements. In other words, if there exist
distinct elements $x\neq x'$ such that $x\cdot y = x'\cdot y$ and $y\cdot x = y'\cdot x$ then this filter marks
that case as ``impossible''. The reasoning is that such examples can be obtained from the classification for
smaller values of $n$ by simply doubling an object into two copies of itself as far as the multiplication table is concerned. 

The other filter called ``half-ones'' imposes the condition that we discussed previously, saying 
the number of nonzero entries of the right-hand multiplication 
matrix $\psi$ should be $\leq$ the number of nonzero entries
of the left-hand one $\phi$, which we recall is fixed by the choice of instance in $\Sigma$. 

To impose the half-ones condition, this filter classifies a position as ``impossible'' if:
\newline
---the mask $r$ contains uniquely defined entries with values $\neq 0_I$ in number larger than the number of
nonzero entries of $\phi$, so that a realization would have to contradict our half-ones condition. 

Entries are provided in the `Parameters' section of the program to turn these filters off or on. By default they are turned on,
and our discussion of numbers of proof nodes below will assume they are turned on unless otherwise specified. 
In a similar vein, the program initialization prints a piece of information about suggested values of the instance $\sigma$, 
namely it gives a list of values for which strictly more than half of the columns are identically zero. It is suggested that
the treatment of these instances, using the symmetry to possibly interchange $\phi$ and $\psi$, should be done separately
for the purposes of a general classification proof. The values of $\sigma$ in this list are not viewed as
cases that necessarily need to be done using the present proof machinery. Therefore, in some of the larger examples
in Section \ref{runs}, 
when speaking of ``suggested locations'' we means instances $\sigma$ that aren't in this list,
i.e. those for which $\leq$ half of the columns in $\phi$ are not identically zero. 

\section{Proof tree}

A proof by cuts leads to a proof tree $\srT$. This is defined in the following way. The definition will be inductive: 
we define a notion of partial proof tree, how to extend it, and when it becomes complete. 

The tree has nodes connected by edges, viewed in a downward manner: each node (apart from the root) has
one incoming edge above it and some outgoing edges below. The leaf nodes are those having no outgoing edges. 

Each node is going to have a position attached to it. Each non-leaf node, also called `passive', has
a pair $(x,y)\in A\times A$ called the `cut' associated to it. 

The root node corresponds to the initial position,
which in our setup comes from the chosen matrix $\phi \in \Sigma$.  Call this position $p_{{\rm root}}(\phi )$. 

The leaf nodes are classified into `active', `done' and `impossible' cases. These depend on the position $p$
associated to the node, via the function ${\rm filter}(p)$ that determines the case. 

The proof tree is said to be complete if all the leaf nodes are either done or impossible. 

If a partial proof tree is not complete, then it can be extended to a new tree in the following way. Choose an active leaf
node corresponding to position $p=(m,l,r,t)$ and choose a cut $(x,y)\in A\times A$. This should be chosen so that 
$k:={\rm stat}(m)(x,y)\geq 2$, such a choice exists because if not then the position would be classified as impossible or done. 

Let $z_1,\ldots , z_k$ be the values such that $m(x,y,z_i)=1$. 

The pair $(x,y)$ is going to be the one associated to the corresponding node in the new tree. The new tree is the same as the
previous one except for the addition of $k$ new nodes below our chosen one, with positions $p_1,\ldots , p_k$. 
These positions are determined as follows. Let $l'_i=l$, $r'_i=r$ and $t'_i=t$. Let $m'_i$ be the same matrix as 
$m$ but with the column $m'_i(x,y,-)$ replaced by a column with a single $1$ at location $z_i$. This defines positions $p'_1,\ldots 
, p'_k$. We then set $p_i:= {\rm process}(p'_i)$. 

\begin{proposition}
Suppose $M$ is a semigroup structure covered by a position $p$ at a node of the tree. Then it is covered by
exactly one of the new nodes $p_1,\ldots , p_k$. Therefore, if $M$ is a semigroup structure covered by the root position 
$p_{{\rm root}}(\phi )$, it is covered by exactly one of the positions corresponding to leaf nodes of the tree. 
\end{proposition}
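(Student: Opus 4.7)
The plan is to prove the local statement first (covering by exactly one of $p_1,\ldots,p_k$) and then obtain the global conclusion about leaves by induction along the unique descending path through the tree. The local statement splits cleanly into existence and uniqueness, both governed by the single constraint that, at the cut location $(x,y)$, the value $\mu(x,y)$ attained by $M$ must coincide with exactly one of the labels $z_1,\ldots,z_k$.

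For existence, write $M = (\mu,\phi,\psi,\tau)$ and use that $M$ being covered by $p=(m,l,r,t)$ forces $m(x,y,\mu(x,y))=1$. Since $z_1,\ldots,z_k$ by definition enumerate the values $z$ with $m(x,y,z)=1$, there is some index $i$ with $\mu(x,y)=z_i$. I would then verify directly that $M$ is covered by the intermediate $p'_i$: the masks $l,r,t$ are unchanged, hence still cover $\phi,\psi,\tau$; and $m'_i$ differs from $m$ only in the column at $(x,y)$, where the unique $1$ sits at $z_i=\mu(x,y)$. Finally I would invoke the defining property of ${\rm process}$, namely that any realization of $p'_i$ is a realization of ${\rm process}(p'_i)=p_i$, to transfer coverage from $p'_i$ to $p_i$.

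For uniqueness, I would use that $p_i\subset p'_i$ (since ${\rm process}$ only removes $1$'s), so if $M$ is covered by $p_i$ then $m_i(x,y,\mu(x,y))=1$, and the column of $m_i$ at $(x,y)$ contains a $1$ only at $z_i$. Hence $\mu(x,y)=z_i$, and since the values $z_1,\ldots,z_k$ are distinct this determines $i$ uniquely. The global statement then follows by induction on the depth of the descending path: starting from the root $p_{\rm root}(\phi)$, the local claim shows that $M$ passes to exactly one child at every cut, producing a unique path terminating at a leaf, which must be marked \emph{done} (not \emph{impossible}) because $M$ genuinely realizes the position there.

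The only step that is not pure bookkeeping is the appeal to the realization-preserving property of ${\rm process}$, used here as a black box. That property is precisely what is asserted in the definition of ${\rm process}$ in Section~\ref{task}, but to make the proof airtight one would want to confirm it step-by-step against the concrete implementation in Section~\ref{appendix}; I expect this to be the main conceptual (as opposed to combinatorial) point, though not a serious obstacle since each deduction inside ${\rm process}$ is designed to be a logical consequence of associativity, which $M$ satisfies.
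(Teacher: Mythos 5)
Your proof is correct and follows essentially the same route as the paper's: locate $\mu(x,y)$ among the $z_i$ to get coverage by exactly one $m'_i$, note the other masks are unchanged, and appeal to the realization-preserving property of ${\rm process}$ to pass from $p'_i$ to $p_i$, with the global statement obtained recursively. Your explicit existence/uniqueness split and the closing remark that the terminal leaf must be \emph{done} rather than \emph{impossible} are just slightly more detailed versions of what the paper states (the latter appears in the paper as a separate remark after the proposition).
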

\begin{proof}
With the above notations at a node corresponding to position $p$, if $M=(\mu , \phi , \psi , \tau )$ is a semigroup
structure covered by $p$ then $\mu (x,y)\in \{ z_1,\ldots , z_k\}$. In view of the replacement of the column $m(x,y,-)$
by $k$ columns corresponding to the values $z_i$, it follows that $\mu$ is covered by exactly one of the masks $m'_i$.
As the other ones $l',r',t'$ are the same as before, we get that $M$ is covered by exactly one of the positions
$p'_i$. Then, the property of the ${\rm process}$ function implies that $M$ is covered by exactly one of the positions $p_i$. 
This proves the first statement; the second one follows recursively. 
\end{proof}

{\em Remark:} There is no semigroup structure satisfying our assumptions and covered by an impossible node. Thus, given
a completed proof tree, the admissible semigroup structures with given $\phi$ are covered by the done nodes of the tree. 

{\em Remark:} 
We note that a single done node could cover several structures, since we are only requiring that the mask $m$ be done; 
the mask $r$ could remain undone and correspond to several different functions $\psi$. This phenomenon is rare.

We will be interested in counting the cumulative number of nodes over the full tree when the proof is completed. 
By this, we mean to count the passive nodes, but not the done or impossible ones.\footnote{Actually, 
the training segment of the program does include a small weight for the impossible or done nodes
with the hope of improving stability, but this isn't counted in the official number of nodes for a proof.}
The count does include the root
(assuming it isn't already done or impossible, which could indeed be the case for a small number of instances of our initial
conditions usually pretty far down in the sieve). 

\section{The best possible choice of cuts}

In order to create a proof, one has to choose the cuts ${\rm cut}(p)=(x,y)$ for each position $p$. The {\em minimization criterion}
is that we would like to create a proof with the smallest possible number of total nodes. This count can be weighted
in a distinct way for the done and impossible nodes. Let $| \srT |$ denote this number, possibly with weights assigned to 
the done and impossible nodes. For the purposes of our discussion, these weights are assumed to be zero, although a very small weight
is included in the implemented program with the idea that it could help the learning process. 

If $v$ is a node of the proof tree, let $\srT (v)$ denote the part of the proof tree under $v$, including $v$ as its root node. 

If $p$ is a position, let $\srT^{\rm min}(p)$ denote a minimal proof tree whose root has position $p$. (There could
be more than one possibility.)  We define the minimal criterion at $p$ to be
$$
\srC ^{\rm min} (p):= | \srT^{\rm min}(p) |.
$$

Suppose $p$ is an active position and $(x,y)$ is an allowable cut for $p$. Let $p_1,\ldots , p_k$ be the new positions generated
by this cut. Define
$$
\srC ^{\rm min} (p; (x,y)) := \srC ^{\rm min} (p_1) + \cdots + \srC ^{\rm min} (p_k).
$$

\begin{lemma}
If $\srT = \srT^{\rm min}(p)$ is a minimal proof tree and $v$ is a node of $\srT$ with position $q$ then
$$
\srT(v) = \srT^{\rm min}(q)
$$
is a minimal proof tree for the position $q$. 

If $\srT^{\rm min}(p)$ is a minimal proof tree with root node having position $p$, which we assume is active,
and if $(x,y)$ is the cut at this root node, then
$$
\srC ^{\rm min} (p) = 1 + \srC ^{\rm min} (p; (x,y)) .
$$
\end{lemma}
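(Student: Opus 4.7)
My plan is to treat both statements as a standard optimal-substructure argument, of the kind one sees when proving a Bellman-equation-style recursion for dynamic programming.

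For the first statement, I would argue by a cut-and-paste exchange. Suppose for contradiction that $\srT(v)$ is not a minimal proof tree for the position $q$, so there exists some complete proof tree $\srT'$ with root position $q$ such that $|\srT'| < |\srT(v)|$. Construct a new tree $\widetilde{\srT}$ by taking $\srT$ and replacing the subtree $\srT(v)$ hanging below $v$ by $\srT'$ (so $v$ is identified with the root of $\srT'$, and the position at $v$ is $q$, matching the root of $\srT'$; the rest of the tree above $v$ is unchanged). The result is still a complete proof tree with root position $p$, because completeness is a local condition on leaves and all the new leaves come from $\srT'$, which is itself complete. Since the counts of passive nodes are additive over the disjoint subtrees attached above and below $v$, we have $|\widetilde{\srT}| = |\srT| - |\srT(v)| + |\srT'| < |\srT|$, contradicting minimality of $\srT = \srT^{\rm min}(p)$.

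For the second statement, I would unpack the definitions directly. Let $\srT = \srT^{\rm min}(p)$, and let $v_1,\ldots,v_k$ be the children of the root, with associated positions $p_1,\ldots,p_k$ — these are precisely the positions produced by the cut $(x,y)$ at the root, so they are the same $p_i$ appearing in the definition of $\srC^{\rm min}(p;(x,y))$. The passive nodes of $\srT$ partition into the root node (which is passive because $p$ is assumed active, and a cut has been made there) together with the passive nodes contained in each subtree $\srT(v_i)$. Hence
\[
\srC^{\rm min}(p) = |\srT| = 1 + \sum_{i=1}^{k} |\srT(v_i)|.
\]
By the first part applied to each $v_i$, the tree $\srT(v_i)$ is a minimal proof tree for $p_i$, so $|\srT(v_i)| = \srC^{\rm min}(p_i)$. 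Summing gives $\srC^{\rm min}(p) = 1 + \sum_i \srC^{\rm min}(p_i) = 1 + \srC^{\rm min}(p;(x,y))$, as desired.

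The only subtle point — and really the main thing to get right — is verifying that the cut-and-paste in the first part actually produces a valid proof tree. Specifically, one must check that the position at $v$ in $\srT$ agrees with the root position of $\srT'$ (both are $q$), so that grafting $\srT'$ in place of $\srT(v)$ respects the inductive construction of proof trees from the previous section: the cut labels and child positions below the graft are those of $\srT'$, and they are consistent with the position $q$ because $\srT'$ was a valid proof tree for $q$. Once that is observed, the additivity of the node count across the graft site makes the contradiction immediate and the second part is a direct consequence.
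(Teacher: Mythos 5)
Your proposal is correct and follows essentially the same route as the paper: the first part is the same cut-and-paste exchange argument (replace a non-minimal subtree by a smaller one to contradict minimality of $\srT$), and the second part is the same additive decomposition of the passive-node count into the root plus the minimal subtree counts. Your extra care about the validity of the graft is a reasonable elaboration but not a different method.
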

\begin{proof}
For the first statement, if one of the sub trees were not minimal it could be replaced by a smaller one and this would
decrease the global criterion for $\srT$, contradicting minimality of $\srT$. Thus, the sub-trees are minimal. 

For the second part, say the nodes below $p$ correspond to positions $p_1,\ldots , p_k$. The tree $\srT^{\rm min}(p)$
is obtained by joining together the sub-trees (that are minimal by the first part) 
$\srT^{\rm min}(p_i)$ plus one additional node at the root. 
This gives the stated count.
\end{proof}

Define the {\em minimizing strategy} as being a function ${\rm cut}^{\rm min}(p)=(x,y)$ where $(x,y)$ is a choice that
achieves the minimum value of $\srC ^{\rm min} (p; (x,y))$ over allowable cuts $(x,y)$ at $p$. We say ``a function'' here because
there might be several choices of $(x,y)$ that attain the minimum so the strategy could be non-unique. 

The fact that the minimization criterion is additive in the nodes below a given node, implies the following---rather obvious---property that says
that if you make a best possible choice at each step along the way then you get a best possible global proof.

\begin{corollary}
If $\srT$ is a proof tree obtained by starting with root node position $p$ and following a minimizing strategy at each node, then
$\srT = \srT^{\rm min}(p)$ is a minimal proof tree for $p$. 
\end{corollary}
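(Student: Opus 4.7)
The plan is to prove the corollary by induction, essentially reading the preceding lemma backwards. Since every proof tree is finite (the cut process terminates after at most finitely many refinements, each column of $m$ having only finitely many $1$'s to break into singletons), induction on the depth of the generated tree $\srT$, or equivalently on the number of cuts remaining along any branch from $p$, is well-founded.

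For the base case, suppose $p$ is a done or impossible position. Then no cuts are made and $\srT$ consists of a single leaf. In this case $\srC^{\rm min}(p) = 0$ (under our weighting convention) and trivially $|\srT| = \srC^{\rm min}(p)$, so $\srT$ is a minimal proof tree for $p$.

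For the inductive step, assume $p$ is active and let $(x,y)={\rm cut}^{\rm min}(p)$ be the cut prescribed by the minimizing strategy, producing child positions $p_1,\dots,p_k$. The subtree rooted at $p_i$ is itself obtained by applying the minimizing strategy starting from $p_i$, so by the inductive hypothesis each such subtree is a minimal proof tree for $p_i$, giving $|\srT(p_i)| = \srC^{\rm min}(p_i)$. Summing and adding $1$ for the root node,
\[
|\srT| \;=\; 1 + \sum_{i=1}^{k} \srC^{\rm min}(p_i) \;=\; 1 + \srC^{\rm min}(p;(x,y)).
\]
Now apply the second part of the preceding lemma to a genuinely minimal tree $\srT^{\rm min}(p)$: if $(x',y')$ is its root cut, then $\srC^{\rm min}(p) = 1 + \srC^{\rm min}(p;(x',y'))$. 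By the definition of ${\rm cut}^{\rm min}$, the value $\srC^{\rm min}(p;(x,y))$ is the minimum of $\srC^{\rm min}(p;(\cdot,\cdot))$ over all allowable cuts, so $\srC^{\rm min}(p;(x,y)) \le \srC^{\rm min}(p;(x',y'))$. Combining, $|\srT| \le \srC^{\rm min}(p)$, and since $\srC^{\rm min}(p)$ is by definition the minimum size of any proof tree for $p$ the reverse inequality is automatic. Hence $|\srT|=\srC^{\rm min}(p)$ and $\srT$ is minimal.

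The main subtlety, and likely the only place one must be careful, is precisely this last comparison: one has to invoke both directions of the preceding lemma, using the first part to guarantee that the subtrees produced by the inductive hypothesis are legitimately minimal, and the second part to certify that any cut prescribed by ${\rm cut}^{\rm min}$ realizes the minimum in the recursion for $\srC^{\rm min}(p)$. Once that bookkeeping is in place, the argument is a one-line collapse using additivity of the node count across subtrees.
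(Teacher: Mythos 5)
Your proof is correct and follows essentially the same route as the paper's: an induction over the (finite-depth) tree, using additivity of the node count together with both parts of the preceding lemma to compare the minimizing-strategy tree against a genuinely minimal one. The paper packages the induction slightly differently---it compares two arbitrary minimizing-strategy trees and shows they have equal, minimal size---but the substance of the argument is identical.
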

\begin{proof}
We'll prove the following statement:
suppose given two different proof trees 
$\srT$ and $\srT'$ that both satisfy the property that they follow a minimizing strategy at each node, then
$| \srT | = | \srT '|$ and both trees are minimal. 

We prove this statement by induction. It is true tautologically at a position that is done or impossible.
Suppose $v$ is a position, and suppose $v_1,\ldots , v_k$ and $v'_1,\ldots , v'_{k'}$ are the nodes below $v$ in
$\srT$ and $\srT '$ respectively. Let  
$p_1,\ldots , p_k$ and $p'_1,\ldots , p'_{k'}$ denote the corresponding positions.
We know by the inductive hypothesis that $\srT (v_i)$ and $\srT' (v'_j)$ are 
minimal trees (because they follow the minimizing strategy), hence 
$$
|\srT (v_i) |= |\srT^{\rm min}(p_i)|  = \srC^{\rm min}(p_i)
$$
and the same for $|\srT '(v'_j) |$. We conclude that
$$
|\srT (v_1) | + \cdots + |\srT (v_k) | = \srC ^{\rm min} (p_1) + \cdots + \srC ^{\rm min} (p_k) =
\srC ^{\rm min} (p; (x,y)),
$$
and again similarly for the $v'_j$. But
$$
|\srT (v) |  = 1 + 
|\srT (v_1) | + \cdots + |\srT (v_k) | 
$$
so combining with the lemma we get 
$$
|\srT (v) |  =  1 + \srC ^{\rm min} (p; (x,y)). 
$$
Similarly 
$$
|\srT '(v) |  =  1 + \srC ^{\rm min} (p; (x',y')). 
$$
The minimizing strategy says that these are both the smallest values among all choices of cuts $(x,y)$. In particular
they are equal, which shows that $|\srT (v) | = |\srT '(v) |$. If we now consider a minimal tree $\srT ^{\rm min}(p)$ 
starting from the position $p$ corresponding to $v$, it starts with a cut $(x'',y'')$ and we have 
$$
|\srT ^{\rm min}(p)| =  1 + \srC ^{\rm min} (p; (x'',y''))
$$
by the lemma. But this value has to be the smallest value among the choices of cuts $(x,y)$ otherwise its value could be
reduced which would contradict minimality of $\srT ^{\rm min}(p)$. Thus, it is equal to the values for $(x,y)$ and $(x',y')$ above,
that is to say
$$
\srC ^{\rm min} (p; (x'',y'')) = \srC ^{\rm min} (p; (x,y)) = \srC ^{\rm min} (p; (x',y')).
$$
Therefore 
$$
|\srT ^{\rm min}(p)| = |\srT (v) |  = |\srT '(v) | .
$$
This shows that $\srT (v)$ and $\srT' (v)$ are also minimal trees. This completes the proof of the inductive statement. 

The inductive statement at the root gives the statement of the corollary. 
\end{proof}

\section{Neural networks}
\label{networks}

The model to be used for learning proofs will consist of a pair of networks $(N,N_2)$ that aim to approximate the 
logarithm of the number of nodes in the best classificatino proof below a given position. The first network
has a scalar output and aims to approximate the logarithm of the number of nodes below the input position,
while the second network has as output an array of size $a\times a$ aiming to approximate, at position $(x,y)$,
the logarithm of the number of nodes below the position we get by cutting at $(x,y)$ from the input position. 

This pair of networks corresponds to the pair of value and policy networks in Alpha Go and Alpha Zero \cite{AlphaGo, AlphaZero}. 
We'll use $N_2$ to decide on the proof strategy, namely by choosing the cut $(x,y)$ that has the smallest output 
$N_2(p;x,y)$ among the allowable cuts. The network $N$ is used, in turn, to train $N_2$. These considerations
are motivated by the fact that calculation of the processing that integrates the associativity axiom does take a nontrivial time
so we wouldn't want, for example, to replace $N_2$ by just evaluating $N$ over all the positions generated by the cuts. 

In more symbolic terms, the networks $N$ and $N_2$
aim to provide approximations 
to ideal functions $\widehat{N}$ and $\widehat{N}_2$ defined as follows:
$$
\widehat{N}(p):= \log _{10} \srC ^{\rm min} (p)
$$
$$
\widehat{N}_2(p;(x,y)):= \log _{10} \srC ^{\rm min} (p; (x,y)).
$$
We then follow the strategy 
$$
{\rm cut}^{N_2}(p):= (x,y) \mbox{  attaining a minimum of } N_2(p;(x,y)).
$$
Clearly, if $N_2$ accurately coincides with $\widehat{N}_2$ then ${\rm cut}^{N_2}(p)={\rm cut}^{\rm min}(p)$ and we obtain
a minimizing strategy. 

The neural networks are trained in the following manner. 
First suppose we are given $N_2$. Then, let $\srT ^{N_2} (p)$ be the proof tree obtained by following 
the cut strategy ${\rm cut}^{N_2}(p)$ (in case of a tie, unlikely because the values are
floating-point reals, the computer determines the minimum here by some algorithm that we don't control). 

Then set $\srC ^{N_2} (p):= |\srT ^{N_2} (p)|$ and let 
$$
\widetilde{N}[N_2](p):= \log _{10}\srC ^{N_2} (p).
$$

On the other hand, suppose we are given $N$. Then for any position $p$ and allowable cut $(x,y)$, let
$p_1,\ldots , p_k$ be the generated positions. We define
$$
\widetilde{N}_2[N](p;(x,y)):= \log _{10}(1 + 10^{N(p_1)} + \ldots + 10^{N(p_k)}).
$$

We would like to train $N$ and $N_2$ conjointly to approximate the values of $\widetilde{N}[N_2](p)$
and $\widetilde{N}_2[N](p;(x,y))$ respectively. The training process is iterated to train $N$, then $N_2$, then $N$, then $N_2$ and so forth.

\begin{theorem}
If such a training is completely successful, that is to say if we obtain $N,N_2$ that give perfect approximations to their target values,
then 
$$
N(p) = \widehat{N}(p)\;\; \mbox{ and } \;\; N_2(p;(x,y)) = \widehat{N}_2(p;(x,y)),
$$
and ${\rm cut}^{N_2}(p) = {\rm cut}^{\rm min}(p)$. Then, the proof tree created using the strategy dictated by $N_2$ is
a minimal one. 
\end{theorem}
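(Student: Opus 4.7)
The plan is to reduce the two equalities $N=\widehat N$ and $N_2=\widehat N_2$ to a straightforward backward induction on positions, after which the Corollary of the previous section immediately delivers the minimality conclusion. First I would work in the exponentiated variables $C(p):=10^{N(p)}$ and $C_2(p;(x,y)):=10^{N_2(p;(x,y))}$. The two fixed-point identities $N=\widetilde N[N_2]$ and $N_2=\widetilde N_2[N]$ then become
$$
C(p) = |\srT^{N_2}(p)|, \qquad C_2(p;(x,y)) = 1 + \sum_{i=1}^{k} C(p_i),
$$
where $p_1,\ldots,p_k$ are the positions generated by cutting $p$ at $(x,y)$. Because $\srT^{N_2}(p)$ is built by performing the cut ${\rm cut}^{N_2}(p)$ at the root and then following the $N_2$-strategy below, the first identity refines to $C(p)=C_2(p;{\rm cut}^{N_2}(p))=\min_{(x,y)} C_2(p;(x,y))$ over allowable cuts.

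Next I would induct on a well-founded complexity measure of the position, for instance the number of pairs $(x,y)$ with ${\rm stat}(m)(x,y)\geq 2$; this strictly decreases along every edge of every proof tree, since in each child the column at the cut becomes uniquely determined and ${\rm process}$ can only add further zeros. In the base case the position is done or impossible, and both $C(p)$ and $\srC^{\rm min}(p)$ are zero. In the inductive step the inductive hypothesis gives $C(p_i)=\srC^{\rm min}(p_i)$ for every child, whence
$$
C_2(p;(x,y)) = 1 + \sum_{i=1}^{k}\srC^{\rm min}(p_i) = 1 + \srC^{\rm min}(p;(x,y)).
$$
Taking the minimum over allowable $(x,y)$ and invoking the lemma gives $C(p)=1+\min_{(x,y)}\srC^{\rm min}(p;(x,y))=\srC^{\rm min}(p)$. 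Passing back through $\log_{10}$ yields $N=\widehat N$ and $N_2=\widehat N_2$. Since ${\rm cut}^{N_2}(p)$ is by definition a minimiser of $N_2(p;\cdot)=\widehat N_2(p;\cdot)$, and $\widehat N_2$ is a monotone function of $\srC^{\rm min}(p;\cdot)$, the $N_2$-strategy is a minimising strategy in the sense of the preceding section; the Corollary then asserts that the resulting proof tree is minimal.

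The main obstacle is not conceptual but bookkeeping, and it concentrates at done or impossible leaves where $\log_{10} 0=-\infty$. The cleanest remedy is the one above: perform the entire argument in the exponentiated variables $C$ and $C_2$ and only convert to logarithms at the very end, or alternatively adopt the convention $10^{-\infty}:=0$ and verify that this is the value the networks are implicitly trained to reproduce at leaves (consistently with the extra small weights mentioned in the footnote of the previous section). Once that convention is fixed, the whole proof is a textbook Bellman-type verification that a fixed point of the dynamic-programming recursion coincides with the optimal-value function, so no further combinatorial or analytic difficulty remains.
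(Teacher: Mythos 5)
Your argument is correct and is essentially the paper's own proof: both are the same backward induction (yours on the number of available locations, the paper's on the maximal tree depth, which are interchangeable well-founded measures) establishing the Bellman-type identity $C_2(p;(x,y))=1+\srC^{\rm min}(p;(x,y))$ from the fixed-point hypotheses and then reading off the minimizing strategy. Your explicit treatment of the $\log_{10}0$ issue at done/impossible leaves is a small point of added care over the paper's implicit weight convention, but does not change the route.
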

\begin{proof}
We define the following inductive invariant $D(p)$ for any position $p$: let $D(p)$ be the maximum depth of any 
proof tree starting from $p$. We note that if $(x,y)$ is any cut allowable at $p$ and if $p_k(x,y)$ denote the
new positions after making that cut (and processing) then $D(p_k(x,y)) < D(p)$. Indeed, given a proof tree for $p_k(x,y)$ with
depth $D(p_k(x,y))$ we can plug it into a proof tree for $p$ that has depth $D(p_k(x,y)) + 1$. 

The possible proof trees have bounded depth, indeed after at most $a^2$ cuts all of the values in the multiplication table 
are determined and any resulting position must be done or impossible. Therefore the maximal value $D(p)$ is well-defined and finite. 

We may now proceed to prove the theorem by induction on the invariant $D(p)$. For a given position $p$, 
consider all the cuts $(x,y)$ allowable at $p$. For each cut, we obtain new positions $p_k(x,y)$. 
The tree $\srT ^{N_2} (p_k(x,y))$ has size $\srC ^{N_2} (p_k(x,y)):= |\srT ^{N_2} (p_k(x,y))|$, the $\log _{10}$ of which is
equal to 
$$
\widetilde{N}[N_2](p_k(x,y)):= \log _{10}\srC ^{N_2} (p_k(x,y)) = \log _{10}|\srT ^{N_2} (p_k(x,y))| .
$$
By the inductive hypothesis, which applies since $D(p_k(x,y)) < D(p)$, we have
$$
\widetilde{N}[N_2](p_k(x,y)) = N(p_k(x,y)).
$$
Putting this into the definition of $\widetilde{N}_2[N]$ we get 
$$
\widetilde{N}_2[N](p;(x,y)):= \log _{10}(1 + 10^{N(p_1(x,y))} + \ldots + 10^{N(p_k(x,y))}) = 
\log _{10}\left  (1 + \sum |\srT ^{N_2} (p_j(x,y))| \right) .
$$
In other words, $10^{\widetilde{N}_2[N](p;(x,y))}$ is the size of the tree that is generated below position $p$ if we choose 
the cut at $(x,y)$. 

As the target value for $N_2(p; x,y)$ is $\widetilde{N}_2[N](p;(x,y))$, our hypothesis now says that 
the size of the tree generated by cutting at $(x,y)$ is $10^{N_2(p;x,y)}$. This shows that
$$
N_2(p;(x,y)) = \widehat{N}_2(p;(x,y)) .
$$

Now, the strategy of the proof is to choose the $(x,y)$ that minimizes $N_2(p;x,y)$. By the previous discussion, this is also
the cut that minimizes the size of the proof tree. 
This shows that ${\rm cut}^{N_2}(p) = {\rm cut}^{\rm min}(p)$.

Therefore, the proof tree created starting from $p$ and following our
strategy, is a minimal one. Therefore, 
$$
\widetilde{N}[N_2](p) = \widehat{N}(p),
$$
and in turn by the hypothesis that $N$ predicts its target value we get 
$$
N(p) = \widehat{N}(p).
$$
This completes the inductive proof of the statements of the theorem. It follows that the proof tree obtained by 
choosing cuts according to the values of $N_2$, is a minimal one.  
\end{proof}

\subsection{Modification by adding the rank}
\label{rankmod}

In our current implementation, we modify the above definitions by an additional term in the function $\widehat{N}_2$, hence in the
training for $N_2$. Let $R(p; (x,y))$ be the normalized rank of $(x,y)$ among the available positions, ordered according to the value of 
$\widetilde{N}_2[N](p;(x,y))$. The normalization means that the rank is multiplied by a factor so it extends from $0$ (lowest rank)
to $1$ (highest rank). We then put
$$
\widetilde{N}^R_2[N](p;(x,y))(p; (x,y)) := \widetilde{N}_2[N](p;(x,y))(p;(x,y)) + R(p; (x,y)).
$$
The network $N_2$ is trained to try to approximate this function. 

Clearly, the theorem works in the same way: if $N_2$ gives a correct approximation to the theoretical value then 
the element of rank $0$ corresponds to the minimal value, and this will also be the minimal value of 
$\widetilde{N}^R_2[N]$. 

This modification is based on the idea of including an element of classification in our training for $N_2$. We recall that
the policy network of \cite{AlphaGo, AlphaZero} was supposed to predict the ``best move'', using a softmax output layer
and being trained with a cross-entropy loss function. 

For us, a pure classification
training would be to try to train by cross-entropy to choose the value of $(x,y)$ that is minimal for $\widetilde{N}[N_2](p)$.
This was tried but not very successfully, the problem being that information about lower but non-minimal values, that could
be of use to the model, is lost in this process. Adding the rank to the score includes a classification aspect, while also not
neglecting the non-minimal but lower values, and expands the extent of the values of the function we are trying to approximate in the
lower ranges. The smaller score values can group very near to the minimal value, meaning on the one hand that some error
in approximating the values can lead to the wrong choice, however it also means that choosing a next-to-best value doesn't
lose too much in terms of size of proof. 

It turns out to be more difficult to approximate the function with addition of the rank, as reflected in the plots of
network output along the training process and the higher loss values for the local network. 
But due to the combination of the two terms, we also need less accuracy in order
to work towards a minimal proof. 

\section{Samples}
\label{samples}

Once we are given $N$, obtaining the sample data for training
$N_2$ to approximate $\widehat{N}_2(p;(x,y))$ is relatively straightforward. Namely, we suppose given some sample
positions $p$, then we choose samples $(p;(x,y))$ (it might not be necessary to include all values of $(x,y)$ for a given $p$)
and the calculation of  $\widetilde{N}_2[N](p;(x,y))$ then just requires applying $N$ to the associated positions
$p_1,\ldots , p_k$. We note here that these are obtained from the raw positions $p'_1,\ldots , p'_k$ (column replacements)
by an application of $p_i := {\rm process}(p'_i)$ so some computation is still involved but in a limited way. 

On the other hand, given $N_2$ to obtain sample data for 
$\widetilde{N}[N_2](p)$ according to our definition, requires a lot of computation.
This is because in the definition of $\widetilde{N}[N_2](p)$ we need to calculate $\srC ^{N_2} (p):= |\srT ^{N_2} (p)|$
which means calculating the whole proof tree $\srT ^{N_2} (p)$. It isn't feasible to do this. We propose here two methods
to get around this difficulty. They both involve further approximation and a recurrent or reinforcement-learning aspect.

\subsection{First method}

The first method is to calculate a pruned proof tree  $\overline{\srT} ^{N_2} (p)$. Here, we only make a choice of cuts
(according to the $N_2$-determined strategy $p\mapsto {\rm cut}^{N_2}(p)$)
to extend the proof tree on some of the active nodes, and prune some other nodes at each stage. In the program it is
called ``dropout'', in other words we drop some of the active nodes. Typically, we fix a number $D$ and at each
iteration, treat at most $D$ active nodes and `drop' the remaining ones. The main way of choosing these is
to choose randomly, however we also envision an adaptive choice of the nodes to keep in order to address the
issue of imbalanced properties of resulting positions, this will be discussed later. 

The proof is then completed significantly faster in the bigger cases (i.e. when $a$ and $b$ 
are strictly bigger than $3$). 
Along the way, this process could also permit to try to gain an estimate\footnote{Attempts to get such an estimate haven't
currently worked very well at all, it seems to be a nontrivial question in highly unbalanced statistics.
In case of success, 
the stochastic dropout proof trees could then be used to estimate the size of the $N_2$-minimizing proof and trigger
an ``early stopping'' of the training process. }
of the size of the fully
completed proof from doing only a very partial one. 

Now we have a proof $\overline{\srT}=\overline{\srT} ^{N_2} (p)$ in which each node $v$ has a position $p(v)$ 
and a subproof $\overline{\srT}(v)$. We would like to define $\widetilde{N}[N_2](p(v))$ by estimating the
value for the full proof 
$$
\srC ^{N_2}(p(v)) = |\srT ^{N_2}(p(v)) |.
$$
The proof $\overline{\srT}(v)$ has a certain number of nodes that remain active since they were dropped; the idea of
the estimation is to use the network $N$ itself to provide an estimate. Namely, if $v'$ is a dropped (hence still active)
node in $\overline{\srT}(v)$ then we add 
$$
10^{N(p(v'))}
$$
into the estimation of the size of $\srT ^{N_2}(p(v))$. This is to say that 
$$
|\srT ^{N_2}(p(v)) |^{\rm estimated}  := | \overline{\srT}(v)| + \sum _{v'} 10^{N(p(v'))}
$$
where the sum is over the dropped nodes $v'$ that are leaves of $\overline{\srT}(v)$. 
We now add the pair 
$$
\left( p(v), \log_{10}\left(  |\srT ^{N_2}(p(v)) |^{\rm estimated}   \right) \right) 
$$
as a sample point in the training data for $N$.

\medskip

\subsection{Second method}

The second method is a one-step version of the first method. Given a position $p$, we use $N_2$ to choose 
the $N_2$-minimizing cut $(x,y)$ for $p$. Let $p_1,\ldots , p_k$ be the positions that are generated from the cut
(including doing processing i.e. $p_i = {\rm process}(p'_i)$ from the raw positions $p'_1,\ldots ,p'_k$). 
Then we put
$$
\widetilde{N}[N_2](p)^{\rm estimated} := \log _{10} \left(
1 + 10^{N(p_1)} + \cdots  + 10^{N(p_k)} \right)
$$
where more precisely if $p_i$ is either done or impossible then the term $10^{N(p_i)} $ is replaced by the
corresponding weight value that we are assigning to this case. We then add the pair
$$
\left( p(v), \widetilde{N}[N_2](p)^{\rm estimated} \right) 
$$
as a sample point in the training data for $N$.

\subsection{Sampling issues}

The first method of generating samples has the property that the generated samples are in the set of positions that
are encountered in an $N_2$-minimizing proof. Notice that the pruned proof tree $\overline{\srT}$ would be a part
of a full proof tree made using the $N_2$-minimizing choice of cuts ${\rm cut}^{N_2}(p)$ at each position $p$. 
This is both useful and problematic. Useful, because in calibrating a minimal proof we are most interested in the positions
that occur in that proof. But problematic because it means that we don't see other positions $p$ that might arise
in a better strategy. Therefore, some exploration is needed. 

The possibility of doing exploration is afforded by the second sampling method. Indeed, it can be done with the same
computation cost starting at any position $p$. Therefore, we can add samples starting from a wide range of positions.

In practice what we do is to add to our pool of positions a random sampling of all the generated positions $p_i$ that
come from various choices of cuts $(x,y)$ that might not be the best possible ones. These may be obtained for example when 
we are doing the computations of samples for training $N_2$. 

The drawback of the second method is that it is entirely reinforcement-based, in other words it doesn't see to the end
of the proof (the importance of doing that was first mentioned to me by D. Alfaya). Theoretically, in the long term after
a lot of training, a pair of networks trained only using the second sampling method should generate the correct values,
however it seems useful to include samples taken according to the first method too as a way of accelerating training. 

The reader will be asking, why not combine the two methods and run a pruned proof for some steps starting from a
position $p$ and collect the sampling data from there. This is certainly another possibility, I don't know whether it can
contribute an improvement to the training. 

We next comment on the adaptive dropout mentioned above. In this problem, some positions generate a very significantly
longer proof than others. Those are the ones that lead to large outcomes in the global proof, so it is better if the
neural networks concentrate their training on these cases to some extent. Therefore, in making a pruned proof tree
$\overline{\srT}$ it will be useful to prune in a way that keeps the nodes that are expected to generate larger sub-trees.
This is measured using the existing network $N$. Thus, in pruning we prioritize (to a certain extent) keeping nodes $v$ that
have higher values of $N(p(v))$. These adaptive dropout proof trees are more difficult to use for estimating the global
size of the proof, or at least I didn't come up with a good method to do that. Thus, the regular stochastic dropout method
is also used, and sampling data is generating using the first sample method from both kinds of proofs. 

Let us mention another technique to add exploration in the first method: we can 
use a randomized choice of proof strategy for the first part of a proof, then switch to the standard $N_2$-based
strategy in the middle, and only sample from nodes that are at or below the switching point.

\section{Network architecture}
\label{architecture}

We describe here the architectures that are used for the neural networks $N$ and $N_2$. 
See \cite{DeepLearning} for the fundamentals.
We didn't do any systematic
hyperparameter search over the possible architectures or possible parameters for the architectures. It could be said,
however, that in the course of numerous iterations of this project, various architectures were tried and the one we
present here seems to be a reasonably good choice with respect to some previous attempts. It is of course likely that
a good improvement could be made.  

The input in each case is the position $p=(m,l,r,t)$ consisting of four boolean tensors of sizes $a\times a \times (b+1)$,
$a\times (b+1)\times 2$, $(b+1)\times a \times 2$ and $a\times a \times a \times 2$ respectively. Recall that we
are interested in the set $B^0$ that has $(b+1)$ elements. Also, $[I^0| = 2$ explaining the values of $2$ in the last
three tensors. In the program, the quadruple of tensors is packaged into a dictionary. 

We remark that $l$ is included even though it isn't the subject of any computation (as it corresponds to the
fixed input $\phi$ at the root of the tree) because the neural network will be
asked to treat several different initial values $\phi$ at the same time. 

The tensors are taken of type \verb+torch.bool+ but are converted to \verb+torch.float+ at the start of  
$N$ and $N_2$. 

We recall that the neural network treats a minibatch all at once, so the tensors have an additional dimension at the start
of size $L:=$\verb+batchsize+.

The output of $N$ is a scalar value, so with the batch it is a tensor of size $L$. The output of $N_2$
is a tensor of size $a\times a$ whose value at $(x,y)\in A\times A$ represents the predicted value if the cut $(x,y)$ is
chosen. Here again an additional dimension of size $L$ is appended at the start, so the
output of $N_2$ is, in all, a tensor of size $L\times a \times a$.

\subsection{Input tensor}

We would like to input the tensors $m,l,r,t$. These have sizes as follows, where $L$ denotes the batchsize: 
$$
\begin{array}{ccc}
\mbox{size } \, m & = & L \times a \times a \times (b+1) \\
\mbox{size } \, l & = & L \times a \times (b+1) \times 2 \\
\mbox{size } \, r & = & L\times (b+1) \times a \times 2 \\
\mbox{size } \, t & = & L \times a \times a \times a \times 2 
\end{array} .
$$
We would like to combine them together into an input vector that will make sense for learning. 

Before doing so, they are converted from boolean values to float values, then normalized so that the sum of the values along
the last dimension is $1$. This normalization, corresponding to viewing the values more like probabilities than like booleans,
was suggested by B. Shminke, see \cite{BalzinShminke}. It seems to improve performance. 

As input into a
fully connected layer, these tensors could just be flattened into vectors (i.e. tensors of size $B\times v$) and concatenated. 
This would give an input of length
$$
v=a^2(b+1) + 4 a (b+1) + 2 a^3.
$$

In order to preserve more of the tensor structure, we prepare the tensors for 
input into a $2$-dimensional convolution layer. For this, we transform them into
tensors of size $L\times f \times a \times a$ by placing the dimensions that are different from $a$ into the first
``feature'' variable, including the middle dimension of $t$ here also, and expanding the $l$ and $r$ tensors by a factor of $a$ to
give them a size that is a multiple of $a\times a$. 
We then concatenate these along the feature dimension. 

More precisely, $m$ will have $(b+1)$ features, $l$ and $r$ have $2(b+1)$ features, and $t$ has $2a$ features. Thus our input
tensor is of size $L\times f \times a \times a$  where $L$ is the batchsize and 
$$
f = 5(b+1) + 2 a.
$$

The choice of method to prepare the input tensor is the first necessary design choice. Many possibilities were envisioned, including
concatenation of various permutations of these tensors, and convolution layers of dimension $1$, $2$ or $3$.

The method we choose to use depends of course on the
processing to follow. The preparation method discussed above is not necessarily the best one but it seems
to work pretty well. 

\subsection{Data flow}

The basic trade-offs that need to be considered are the question of how data flows through the layers, versus 
computational time for a forward pass, and also training time needed for calculation of the gradients and back-propagation. 

We may illustrate this by looking at the idea of starting with a fully-connected or {\em linear} layer. Let's consider for example the case
$(a,b)=(5,3)$. Then the fully flattened input dimension from above is $v=a^2(b+1) + 4 a (b+1) + 2 a^3 = 430$. 
A fully connected layer towards $n$ neurons that would be declared as follows:
$$
\verb+self.linA = nn.Linear(430,n)+
$$
is going to involve $430n$ weight parameters (plus $n$ bias parameters that we don't worry about). If we take, say $n=100$ this
gives $43000$ parameters for this layer only. Furthermore, the problem of size of data goes down from $430$ to $100$ so
it persists. On the other hand, we could take for example $n=32$ and this would give $13760$ weight parameters and yield
a very manageable size of tensor to deal with further. The problem here is that it is trying to pack all the information from
$430$ values (that are either $0$ or $1)$ into a $32$ dimensional space. All further operations will depend only on the
$32$ parameters. 

This constricts the flow of data into the network, something that we would like to avoid. A similar consideration applies at the
output stage. In the middle, some constriction might even be desireable as long as the data can also take a different path by
``skip connections''. 

\subsection{Convolution layers}

It seems that we should be able to do better. If we start with a convolutional layer \cite{DeepLearning,convolutional2,convolutional}, 
we notice (in the same example $(a,b)=(5,3)$)
that the number of features is $f = 5(b+1) + 2 a=30$ spread over a $5\times 5$ array. The first convolutional layers can
therefore with no problem maintain or even increase this number of features. 

The price here is that our set of $a=5$ vectors has an ordering that is not based on any strong natural consideration. Therefore, the
convolutional aspect needs to be higher in order to obtain a good mixing between the features at different locations $(x,y)$. 

It turns out that a weak ordering property is in fact obtained by
our sieve process of choice of the initial matrix $\phi$: the sieve process that we use will make a choice of ordering
for each equivalence class of input, and our implementation has the property that it tends to put more $1$'s as we go towards
one of the corners of $\phi$. This is admittedly not a very convincing justification for why it would be natural to 
use a convolutional structure, so there is undoubtedly room for having a better architecture that takes into account the
tensor property but without requiring an ordering (``graph neural networks'' GNN could come to mind, see 
for example \cite{Galland} and the references therein). 

In order to mix the values between different locations, we choose to use alternating convolution windows of sizes 
$[5,1]$ and $[1,5]$. In a window of size $[5,1]$ it means that the output feature values at location $(x,y)$ are obtained by 
combining together (using the convolution weight matrix)
those of locations\footnote{With circular padding the locations in the window are taken modulo $a$.}
$$
(x-2,y),\;\;
(x-1,y),\;\;
(x,y),\;\;
(x+1,y),\;\;
(x+2,y).
$$
Window size $[1,5]$ means the same but with $y$. In our use
cases, the number of values is $a$, usually $\leq 5$, so convolution with these window values allows data to interact across the
array.

An additional trick is available to reduce the number of parameters: {\em grouped convolution}.
That places the features into groups and applies
a different convolution to each group. A grouped $2d$ convolution with a window of size $[5,1]$, with $8n$ input
features, $8n$ output features and $n$ groups, has
$$
n\cdot 8\cdot 8 \cdot 5 = 320n
$$
parameters, whereas the same non-grouped version would have $320n^2$ parameters. The use of grouped convolution
can allow us to maintain a good size of data flow through a layer, while diminishing the number of parameters and thereby improving the
computational and training time. 

The grouped convolution layer of the previous paragraph is declared as follows:

\noindent
\hspace*{-.5cm}
{\small
\verb+self.convA = nn.Conv2d(8*n,8*n,[5,1],padding = [2,0],padding_mode = 'circular',groups = n)+
}

We note that the group idea can also be applied in a linear layer, by writing the linear layer as a $1d$ convolution that
is going to be applied on a trivial one-element array.

The reader is referred to the program source for the precise specifications of the neural networks. Pytorch notation is
sufficiently self-explanatory that it should be straightforward to understand.

\section{Examples}
\label{runs}

In this section we show the graphs of results of some runs of our networks on various cases. The value of $\sigma$ indicates the
choice of initial instance $\phi \in \Sigma$. The captions include information on the number of trainable parameters in
the global and local networks, and the numbers of iterations in the loops of training operations and sample proofs.

Each graph is done from a ``cold start'', the networks beginning with their standard random initialization. 

\subsection{Training segments}

A given phase of training involves a succession of loops with various mini-batch sizes and numbers of gradient descent steps
per minibatch. Such a ``training segment'' involves the the following steps:

\bigskip

\noindent
\hspace*{.3cm}
\begin{tabular}{|p{1.2cm}|p{1.8cm}|p{2.5cm}|}
\hline
\multicolumn{3}{|c|}{global network $N$} \\
\hline
mini-batches & minibatch size& descent steps per batch \\
\hline
\hline
2 & 20 & 20 \\
\hline
1 & 30 & 30 \\
\hline
2 & 40 & 10 \\
\hline
5 & 60 & 10 \\
\hline
3 & 20 & 8 \\
\hline
3 & 40 & 5 \\
\hline
5 & 30 & 3 \\
\hline
\end{tabular}
\hspace*{2cm}
\begin{tabular}{|p{1.2cm}|p{1.8cm}|p{2.5cm}|}
\hline
\multicolumn{3}{|c|}{local network $N_2$} \\
\hline
mini-batches & minibatch size& descent steps per batch \\
\hline
\hline
3 & 20 & 20 \\
\hline
1 & 30 & 15 \\
\hline
3 & 60 & 4 \\
\hline
3 & 40 & 10 \\
\hline
3 & 20 & 3 \\
\hline
3 & 40 & 2 \\
\hline
3 & 30 & 1 \\
\hline
\end{tabular}

\bigskip

\noindent
In all this gives, for each training segment, the following numbers:
\begin{itemize}

\item
for the global network, $21$ minibatches with $780$ samples and $194$ gradient descent steps

\item
for the
local network, $19$ minibatches with $660$ samples and $135$ gradient descent steps. 

\end{itemize}

All of these choices are arbitrary. The basic idea is to 
overtrain at the start of the training segment, doing a high number of gradient descent steps on a small minibatch, then to 
train more gradually towards the end of the segment. The gradual part is included at the end because the networks will be
used for further proofs (either sampling or doing the actual proofs) only at the end of the training segment. I couldn't say how good
these choices are. 

In the early stages of training, noise was added between the input layer and the network, and the network
weights were perturbed slightly at the outset of each stage. 

In the diagrams shown below, the caption gives the number of basic loop iterations per proof, and the number of training segments per
basic loop. Therefore, the total number of gradient descent steps (resp. samples) in between each pair of proofs
will be the product of the above values ($194$ or $135$, resp. $780$ or $660$) by
the number of basic loop iterations and then by the number of training segments per basic loop.

\subsection{Heuristic benchmark}

For comparison, we give the results of an heuristic benchmark strategy (horizontal line on the graphs). This strategy is obtained by
choosing an ordering of the locations $(x,y)$ and just following the rule of making a cut at the first available location. 
It turns out that a pretty good result is obtained by taking the following order: first $(0,0)$, then $(1,1)$, then
$(1,0)$, then $(0,1)$, then continuing with the remaining $(x,y)$ in lexicographic order starting with $(0,2)$. 
That strategy was found almost by accident. It seems to give a reasonable baseline number for comparison, which in some cases is
the minimum. The learning mechanism is able to do better (or just as well when it is the minimum). 

\subsection{Size $(3,2)$}
\label{graph32}

We start by looking at size $(a,b)=(3,2)$. Consider an easy initial instance, $\sigma = 4$. The number of nodes stabilizes at $11$,
that will be shown to be the minimum in Theorem \ref{theormin}. 

\bigskip

\includegraphics[scale=0.4]{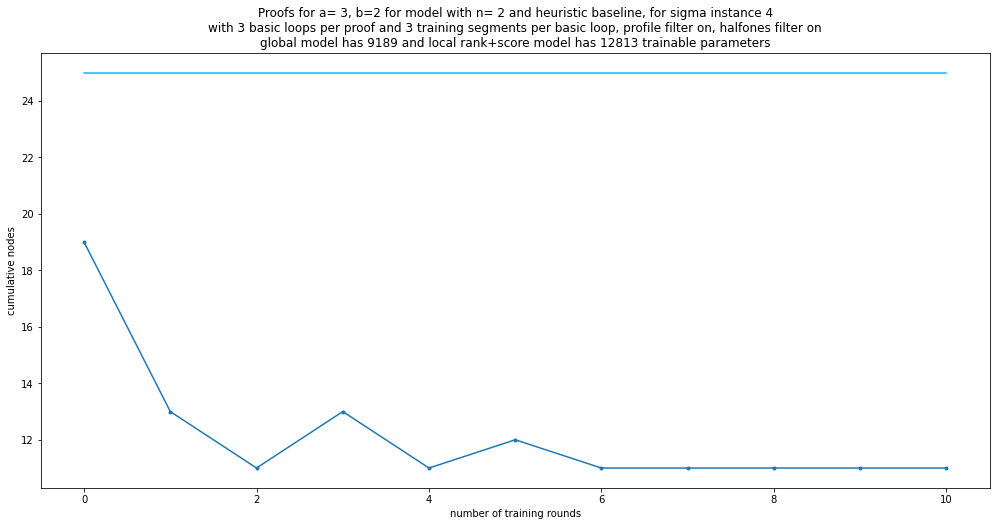}

\includegraphics[scale=0.4]{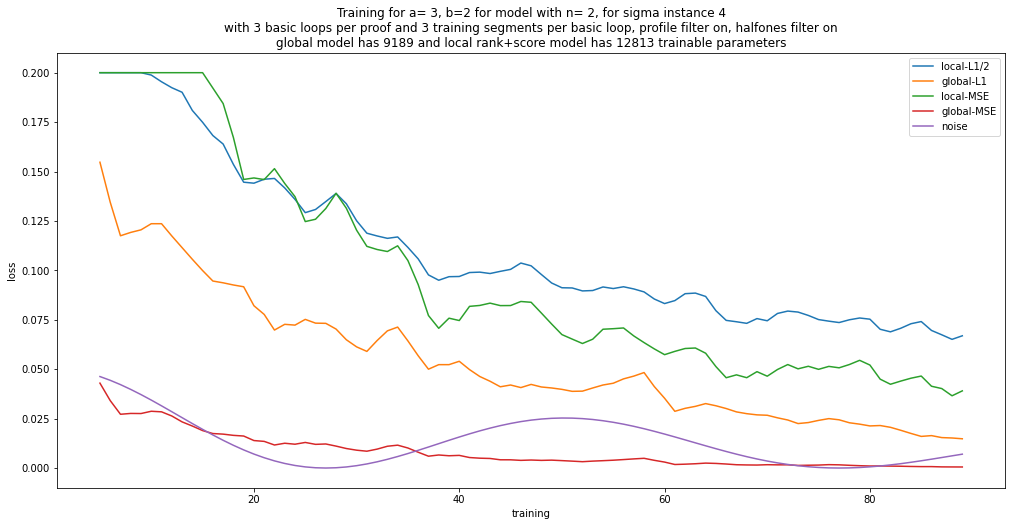}

\bigskip

The first graph pictured above is the number of proof nodes in the sequence of proofs, whereas the second graph gives various loss functions of the
training as the machine evolves. Several loss data points 
are taken in between each pair of
proofs, so the horizontal labels aren't the same. The purple curve represents some noise that is added in the training process,
as we imagine could be useful loosely following \cite{BalzinShminke}. 

\bigskip

In Section \ref{minimality} we'll discuss in more detail all $13$ instances $\sigma$ for size $(a,b)=(3,2)$.
The most difficult case is $\sigma = 3$:

\bigskip

\includegraphics[scale=0.4]{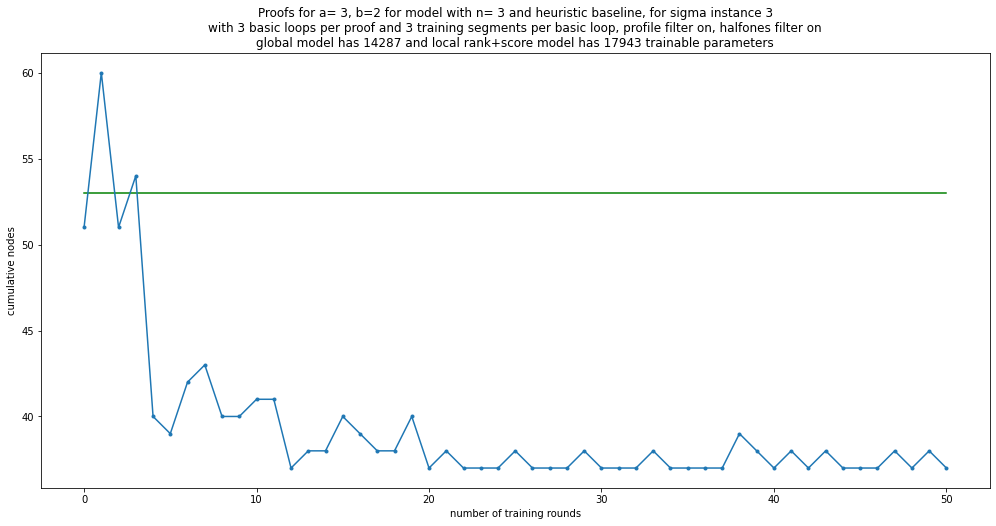}

\includegraphics[scale=0.4]{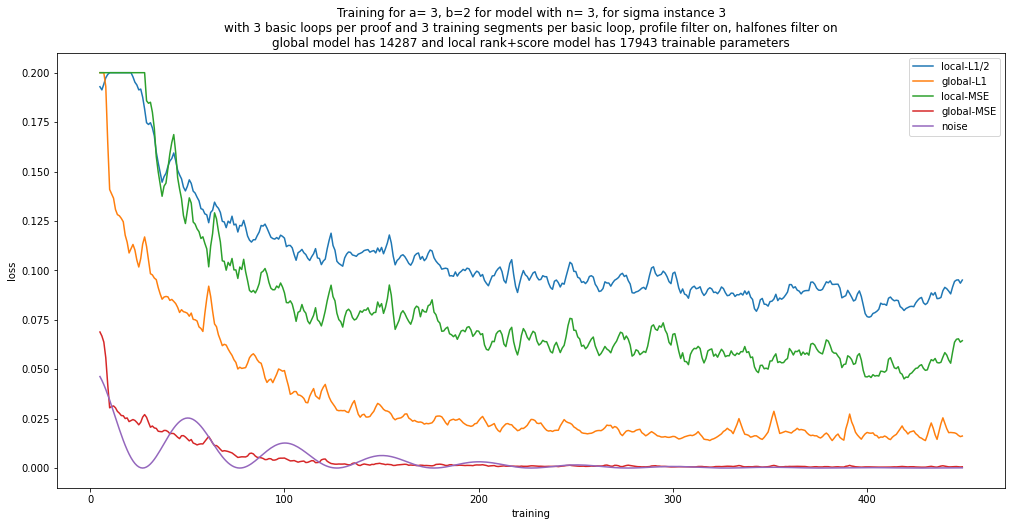}

\bigskip

It looks like the theoretical minimum is $37$, which is indeed the case, as we show in Theorem \ref{theormin} below. 

The other values of $\sigma$ are generally easier to treat individually, yielding pictures more akin to the ones for
$\sigma = 4$. We record next the result of looking at the 
full collection of proofs with all the $13$ initial values of $\sigma$ simultaneously. 

\bigskip

\includegraphics[scale=0.4]{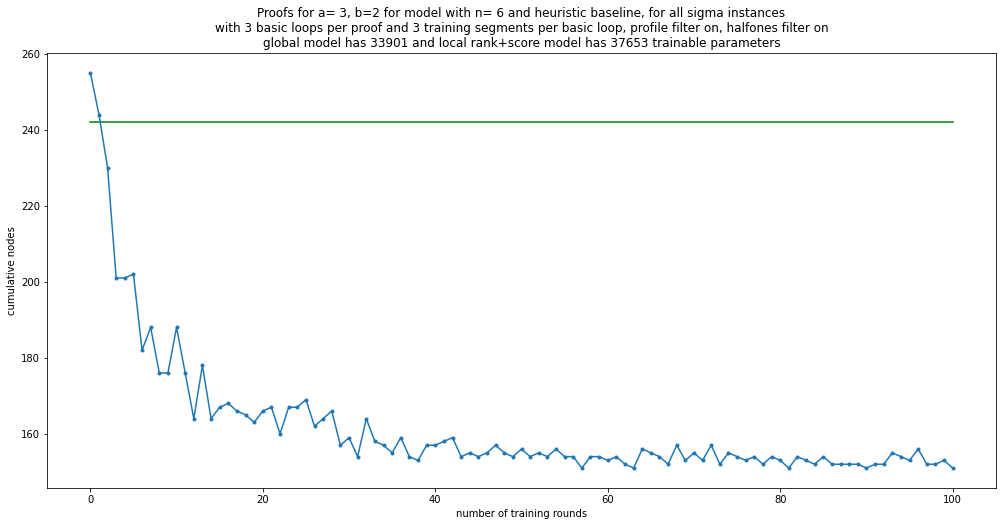}

\includegraphics[scale=0.4]{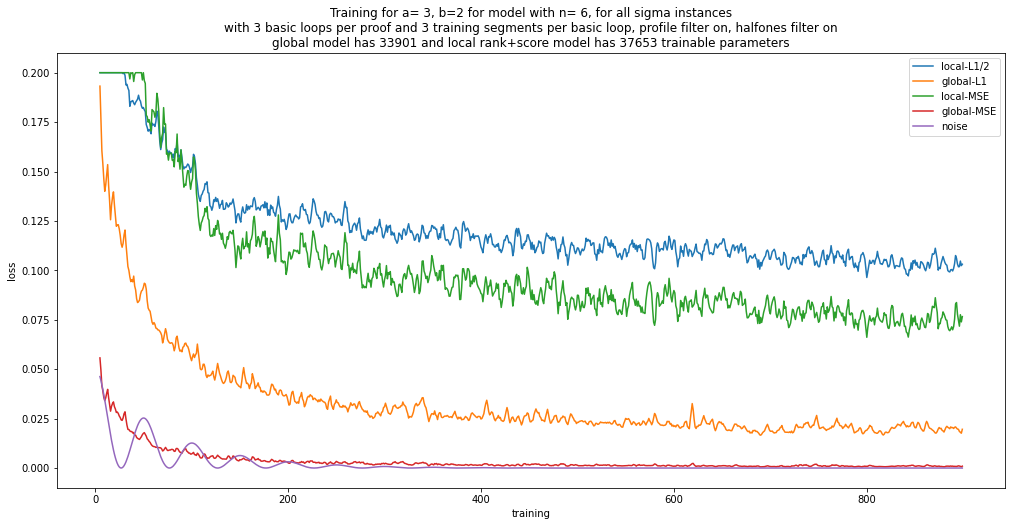}

\bigskip

The theoretical minimum value of $151$ (see Theorem \ref{theormin}) is attained at several proofs (numbers $57$, 
$63$, $81$, $90$, $100$), although the 
model has a tendency to stabilize around a slightly higher value. 

\subsection{Size $(4,2)$}
\label{graph42}

Here is a case for size $(a,b)=(4,2)$, with $\sigma = 5$. 

\bigskip

\includegraphics[scale=0.4]{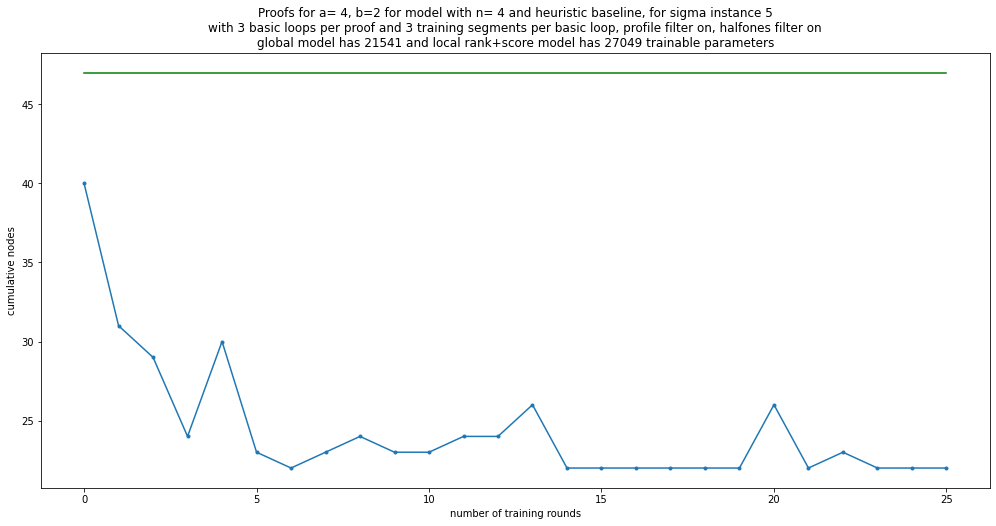}

\includegraphics[scale=0.4]{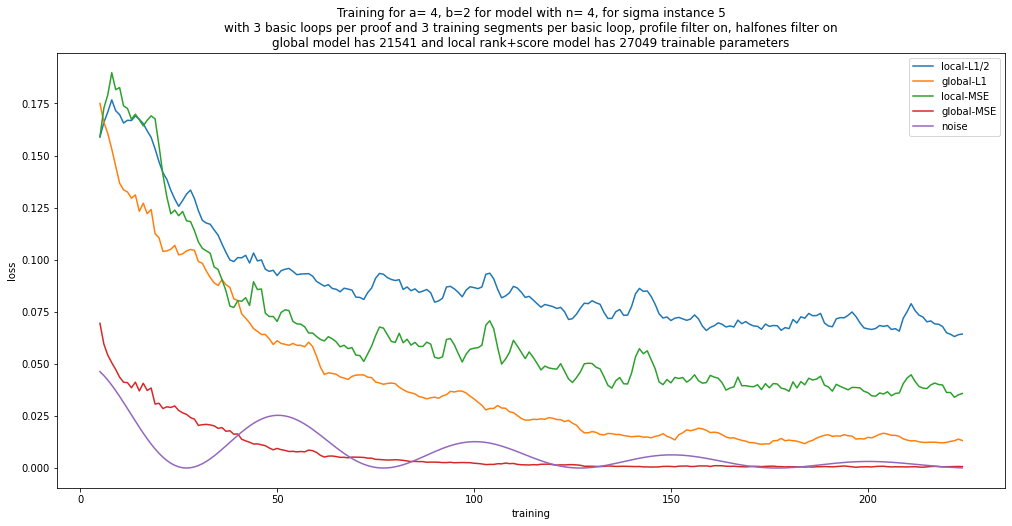}

\bigskip

We conjecture that the theoretical minimum in this case is the value $22$ that is attained first at proof number $6$
and often at the end. 

\subsection{Size $(5,3)$}
\label{graph53}

Here is a sample case for size $(a,b)=(5,3)$, namely $\sigma = 7$, the first one in the range of ``suggested locations''
(cf \ref{addfilt}). 

\bigskip

\includegraphics[scale=0.4]{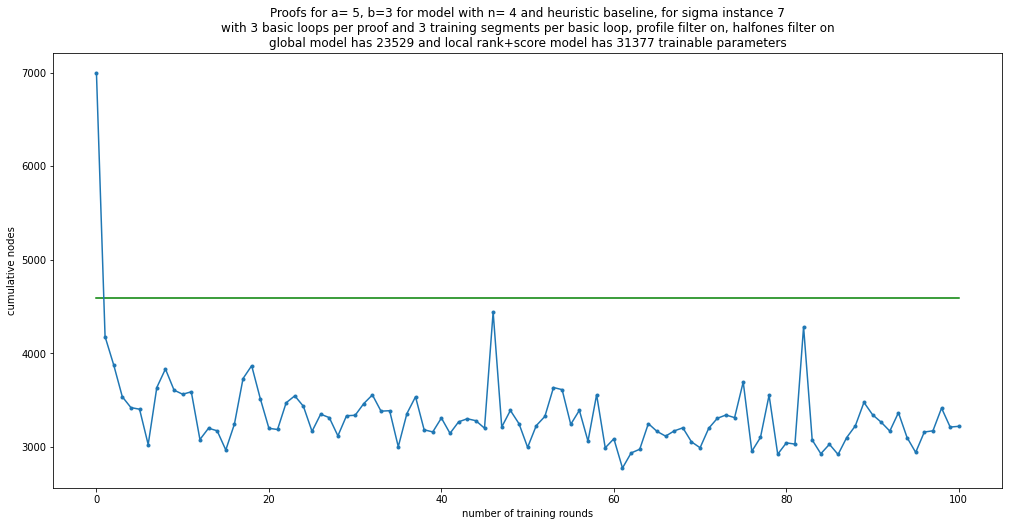}

\includegraphics[scale=0.4]{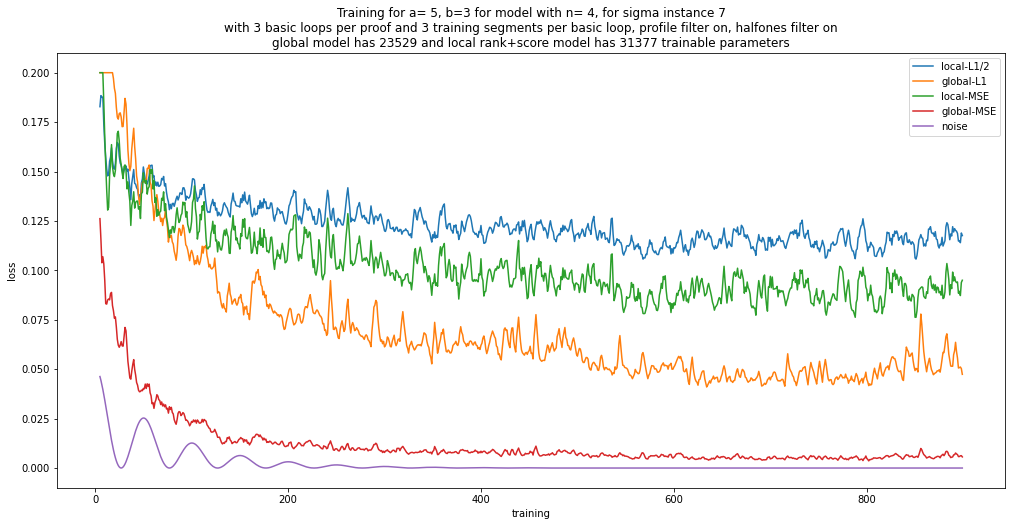}

\bigskip

Here the minimal value attained, at proof number $61$, is $2773$. We don't know how far that might be from the
theoretical minimum.

\subsection{Size $(4,5)$}
\label{graph45}

Here is a sample case for size $(a,b)=(4,5)$, namely $\sigma = 22$. Again, this $22$ is the first in the range of suggested locations
(cf \ref{addfilt}). 
This training and proving process, for a cycle of $50$ proofs, took 2 hours 35 minutes on a Google Colab GPU. 

\bigskip

\includegraphics[scale=0.4]{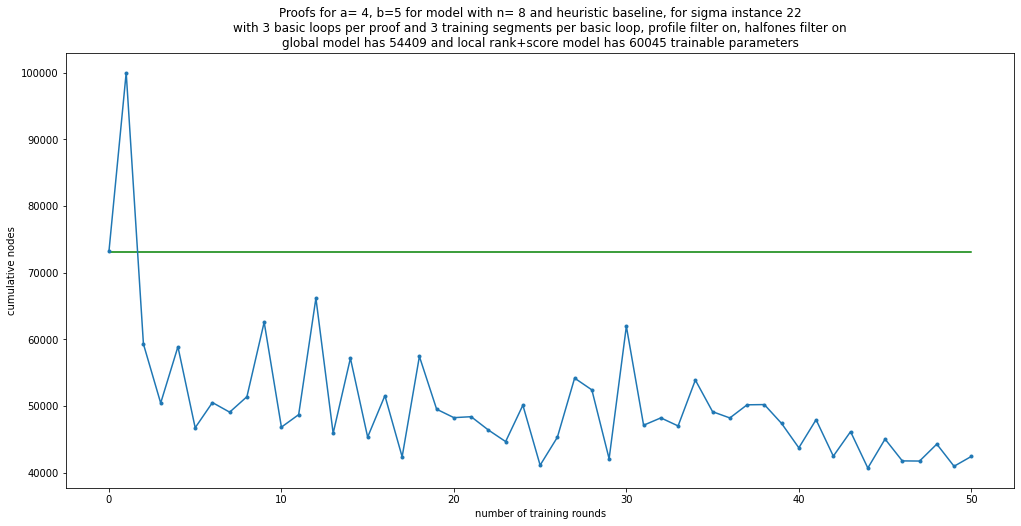}

\includegraphics[scale=0.4]{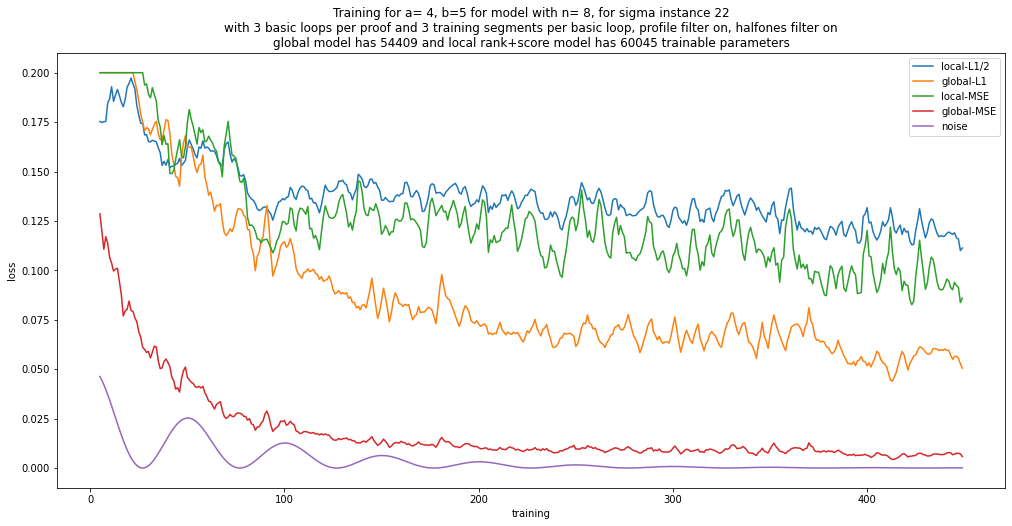}

\bigskip

The minimal value attained, at proof number $44$, is $40720$ (with anew $40984$ at proof $49$). 
We don't know how far that might be from the
theoretical minimum. 

The number of nodes in the proof tree is of the same order of magnitude as
the number of trainable parameters in the neural network. A larger value $n=8$ was chosen for the size of 
the networks in view of the larger size of the problem, leading to $54409$ global and $60045$ local parameters. 
The output data for choice of cut at a node involves at least $2$ and up to $a^2=16$ values, so in all we can say that the number
of parameters for network $N_2$ is significantly smaller than the number of data values required to do a single proof. 

Here are plots of the predictions of the global  and local networks $N$ and $N_2$, after having trained for the cycle $50$ proofs.

\hspace*{3cm}\underline{global} \hspace*{7.5cm}\underline{local}

\hspace*{1cm}
\includegraphics[scale=0.4]{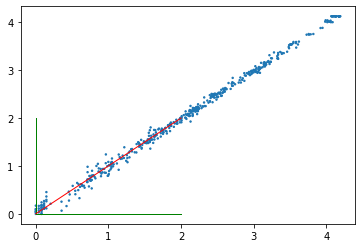}
\hspace*{3cm}
\includegraphics[scale=0.4]{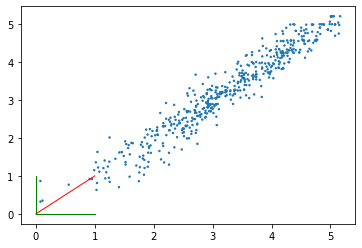}

\noindent
Recall that training data for $N_2$ includes the rank plus the score, see the modification 
\ref{rankmod}, that is why it doesn't appear all that accurate even at this stage. 

\bigskip

The $(4,5)$ case concerned semigroups of size $11$. There are $1053$ possible instances $\sigma$, but the later values are expected
to correspond to shorter proofs, so the instance $\sigma = 22$ that was treated here should be at the high end of proof size.

\subsection{Generalization: skipping a value for training}
\label{generalization}

In order to test the capacity for generalization, we can also do the following: train on all the values in a certain segment,
except skipping one value, then test this by doing proofs at the value that was skipped. Here is the result 
for the case of $(a,b)=(3,2)$, training on all values except $\sigma = 5$ and proving for $\sigma = 5$: 

\bigskip

\includegraphics[scale=0.4]{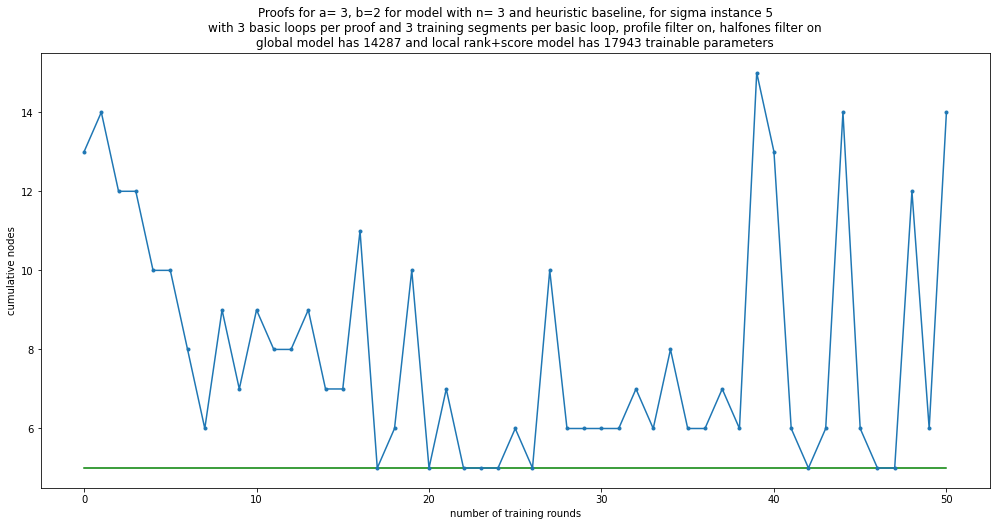}

\includegraphics[scale=0.4]{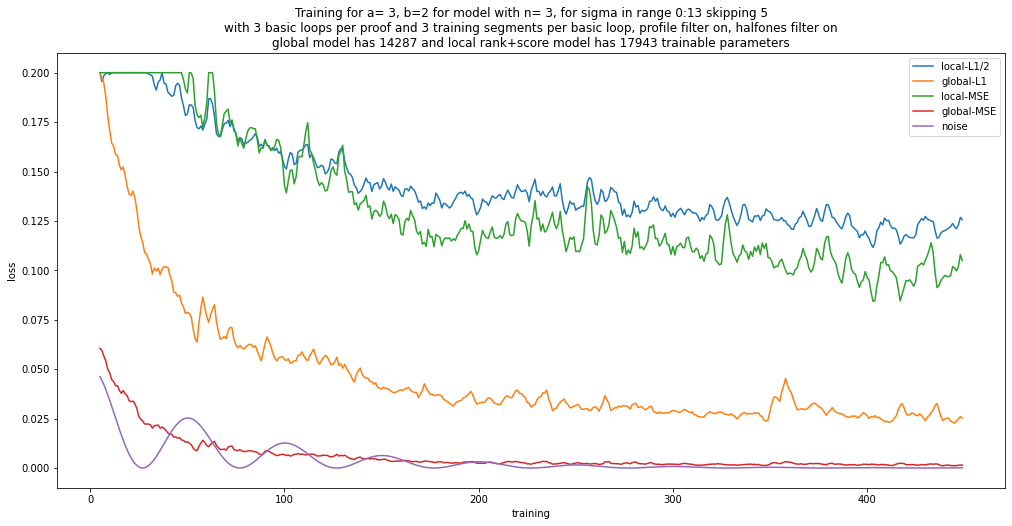}

\bigskip

The results are rather chaotic, with the node values getting close to or at the minimum value of $5$ but then
bouncing back considerably. In the middle of training the results were a little more consistent, then getting worse as training 
increased. One might conjecture that it got worse later due to the networks memorizing the answers for all the $\sigma$ values
except $5$, thereby degrading the performance at $\sigma = 5$.

\subsection{Other sizes}
\label{graphOther}

We include here a few graphs of node numbers for various other sizes. See the captions of the
diagrams to describe the individual cases. The loss graphs aren't included, as they look fairly similar to the ones above. 

\bigskip

\includegraphics[scale=0.4]{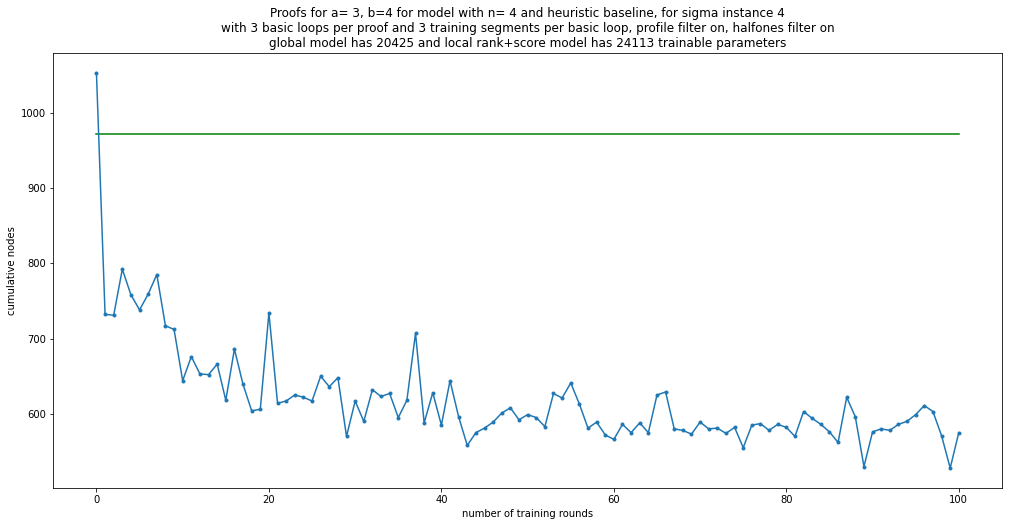}

\includegraphics[scale=0.4]{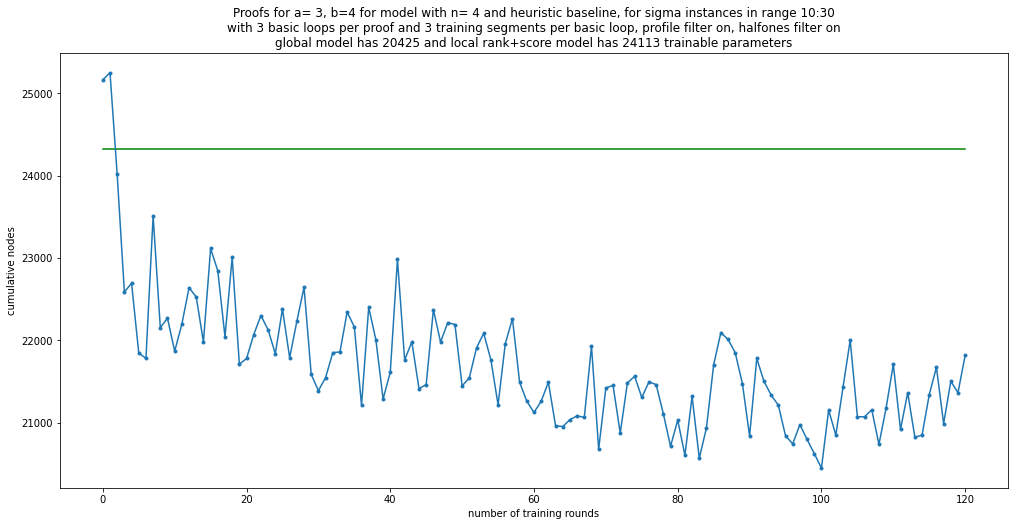}

\includegraphics[scale=0.4]{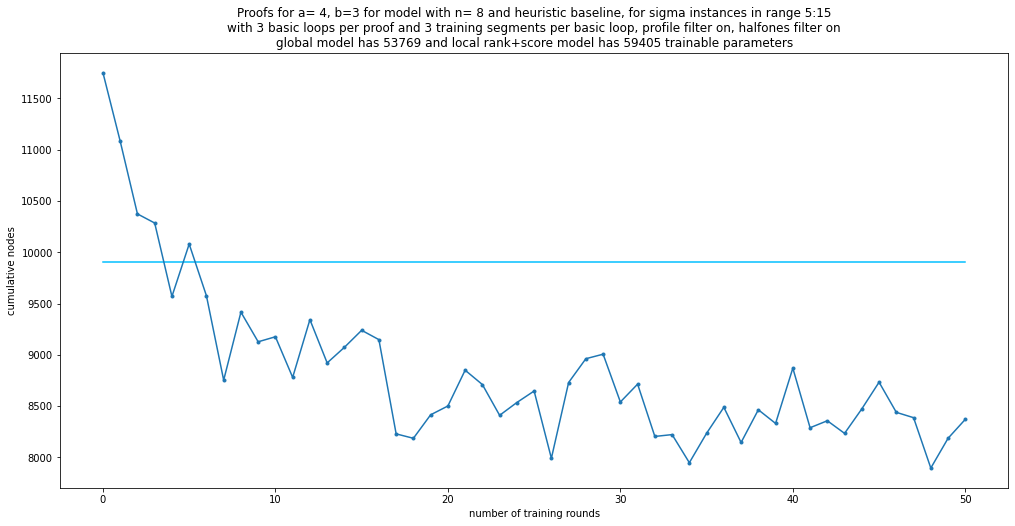}

\includegraphics[scale=0.4]{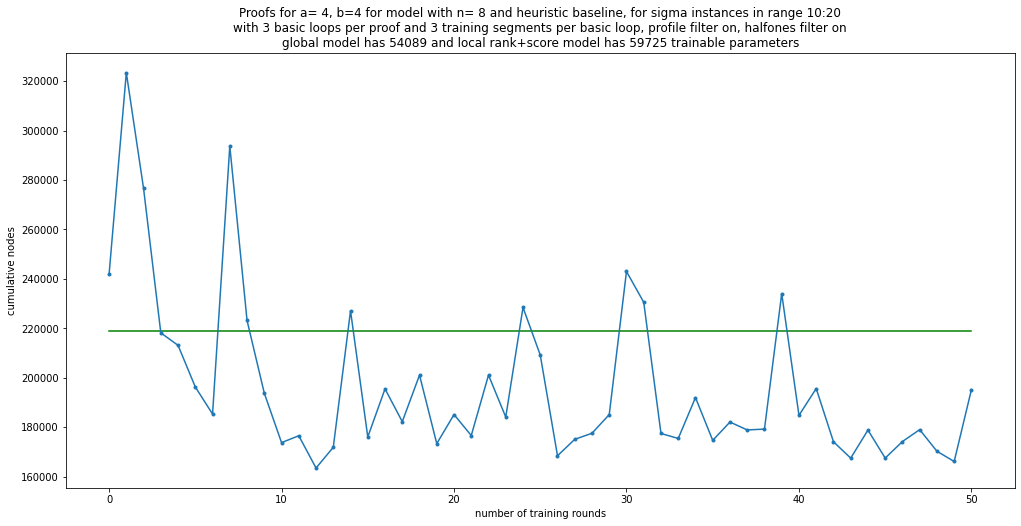}

\bigskip

In the last case, concerning semigroups of size $10$ with a segment of instances $\sigma$ of length $10$, 
the total number of parameters of both networks is 113814, whereas the number of nodes in the 
proof is more than 160000.

\section{Proof of minimality}
\label{minimality}

We now go back to smaller values
of $(a,b)$, namely let's look at the case $a=3$ and $b=2$.  There are 13 instances of $\sigma$ for initiating the proof.
If we choose one of the ones with a larger proof size, namely $\sigma = 3$, the training process seems to lead
to a minimum of  ${\rm nodes} = 37$ as we saw in \ref{graph32} above. It is natural to conjecture that this is in fact the
theoretical minimum for the number of nodes in the proof.

In view of the small size of this example it was feasible to find the minimal size and show that
it is indeed $37$. We calculate the theoretical minima for all cases of size $(a,b)=(3,2)$. 
Note that these proofs use the profile filter and half-ones filter (see \ref{addfilt}). For certain values 
such as $\sigma = 3$ it wasn't possible to find
the minimum without the filters, since the depth becomes too big, for example 
the benchmark number of nodes for $\sigma = 3$ without additional
filters is $537$.

\begin{theorem}
\label{theormin}
For the case $a=3$ and $b=2$, the minimal number of nodes $\nu_{\rm min}$ in a classification proof according to our scheme
is given in the following table:

\bigskip

\noindent
\hspace*{2cm}
\begin{tabular}{|l|c|c|c|c|c|c|c|c|c|c|c|c|c|}
\hline
$\sigma$            &  $0$ &  $1$ &  $2$ &  $3$ &  $4$ &  $5$ &  $6$ &  $7$ &  $8$ &  $9$ & $10$ & $11$ & $12$ \\
\hline
$\nu_{\rm min}$ &  $9$ & $21$& $23$& $37$& $11$&  $5$ & $3$  & $5$  &  $3$ & $11$& $3$  & $3$   & $17$ \\
\hline
\end{tabular}

\bigskip

\noindent
Our pair of neural networks configured and trained as described in previous sections is able to find
a minimal proof in each case. The minimum for all $\sigma$ instances together is $151$, and the model is 
able to find this value, although sparsely (see the graphics in \ref{graph32} above). 
\end{theorem}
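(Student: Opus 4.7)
The plan is to compute $\srC^{\rm min}(p_{\rm root}(\phi))$ for each $\sigma$ by exhaustive dynamic programming on the finite set of positions reachable from the root. The lemma in the previous section gives the Bellman recursion
$$
\srC^{\rm min}(p) \;=\; 1 + \min_{(x,y) \text{ allowable}} \sum_{i=1}^{k(x,y)} \srC^{\rm min}(p_i(x,y))
$$
at each active $p$, with base values $0$ at done and impossible leaves (using the zero-weight convention adopted in \S\ref{task}). Each cut strictly decreases the total statistic $\sum_{x,y} {\rm stat}(m)(x,y)$ of the $m$-component, so every branch of any proof tree terminates after at most $a^2 = 9$ cuts, and the set of positions reachable from the root under any sequence of cuts and ${\rm process}$ is finite.

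First I would implement a memoized recursive routine: encode each position $(m,l,r,t)$ as a canonical packed boolean key, store computed values of $\srC^{\rm min}$ in a hash table, and at every active position enumerate all available cuts $(x,y)$ with ${\rm stat}(m)(x,y)\geq 2$. For each cut, apply the column replacement of \S\ref{task} and the ${\rm process}$ function from Section \ref{appendix}, classify the result via ${\rm filter}$, and recurse. The profile filter and the half-ones filter of \S\ref{addfilt} must be switched on inside the filter step, exactly as in the neural-network runs being benchmarked; they are both essential to obtain the table values and, crucially, they are what make the enumeration feasible, since the raw state space of $\sigma=3$ without filters produces a benchmark of $537$ and a correspondingly larger minimization search.

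The main obstacle I expect is the $\sigma=3$ instance, whose answer $37$ is largest and whose branching is richest. Memoization is indispensable: after ${\rm process}$ is applied, many distinct cut sequences collapse to the same position, and without a hash table the enumeration would revisit exponentially many nodes. The very small instances with $\nu_{\rm min}\in\{3,5\}$ terminate almost immediately, since a single good cut either determines the multiplication table or produces impossible children; the intermediate cases ($\nu_{\rm min}\in\{9,11,17,21,23\}$) should complete in a few minutes on a standard machine. The aggregate minimum $151$ follows mechanically once the per-$\sigma$ values are known, since the optimization decouples across $\sigma$.

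For the second half of the statement, the verification is observational: run the pipeline of Sections \ref{networks}--\ref{samples} from a cold start, then follow the $N_2$-induced strategy ${\rm cut}^{N_2}$ and count passive nodes in the resulting proof tree. It then suffices to check that the minimum of the empirical node counts over the displayed training cycle matches $\nu_{\rm min}(\sigma)$ computed above, and that $151$ is attained for the combined run; the plots of \S\ref{graph32} record precisely this information.
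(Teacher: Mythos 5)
Your proposal is correct in substance but follows a genuinely different computational route from the paper. You set up the Bellman recursion $\srC^{\rm min}(p) = 1 + \min_{(x,y)}\sum_i \srC^{\rm min}(p_i)$ and propose to evaluate it by memoized exhaustive recursion over the set of reachable positions, hashing canonical encodings of $(m,l,r,t)$ so that distinct cut sequences collapsing to the same processed position are computed once. The paper instead builds the full AND--OR search object as a \emph{tree} (no sharing of identical positions) and runs a branch-and-bound: each newly created vertex gets an upper bound by running the actual proving scheme (benchmark heuristic or trained model) from that position and a lower bound initialized to $1$; these bounds are propagated upward by the same sum-then-min rule, and subtrees are pruned when a vertex's bounds meet or when a cut's lower bound already exceeds the best upper bound among sibling cuts. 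The author states explicitly that this pruning is what makes the computation feasible, and even then the $27$ vertices after the first cut had to be handled separately for runtime reasons. Your memoization plays the role that pruning plays in the paper --- it is a plausible alternative way to tame the explosion, and it has the advantage of not depending on the quality of any heuristic upper bound, but you should be aware that its feasibility rests on the reachable position space actually collapsing well under hashing, which is an empirical claim you have not established; the paper's remark that the unpruned tree ``would be way too big'' is at least a warning sign. Both your argument and the paper's are ultimately computational verifications rather than hand proofs (the paper even flags its implementation as uncertified, offering only an ``indication'' of proof), and both treat the second half of the statement --- that the networks attain these minima --- as an observational fact read off the training graphs, so on that point you agree exactly.
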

\begin{proof} {\em [Indication]}
We calculate the minimal size of proof $\nu_{\rm min}$ 
in the following way. We successively create a tree-like object where each vertex corresponds to
the result of a succession of cuts with its associated mask. 
Below a vertex $v$ are new vertices corresponding to each of the locations $(x,y,p)$ that are available
in the mask associated to $v$, and at which we place the masks resulting from cutting at $(x,y,p)$ then processing. 
Vertices corresponding to done or impossible masks are not included. As this object is being created, we also run the proof 
model on each new set of vertices. This gives an upper bound for the number of nodes below a given one in the proof. 
The lower bound on new vertices is set to $1$. The upper and lower bounds are then propagated upwards in the tree,
by the rule that the number of nodes associated to the cut $(x,y)$ at a vertex $v$, is the sum over $p$ of the number of
nodes at each available $(x,y,p)$. Then, the number of nodes associated to $v$ is equal to the minimum 
of these values over available
$(x,y)$, plus $1$ for $v$ itself. This propagation is the same for the lower and upper bounds. 

The tree is furthermore pruned at each successive step by the following rules: if a vertex gets a lower bound equal to its
upper bound then it is removed from play. Also, if a collection of vertices associated to $(x,y)$ yield a lower bound that
is greater than the minima of the upper bounds over all $(x,y)$ under a given vertex, then that collection of vertices will 
not yield anything useful and they are pruned. 

We note that the pruning is essential---otherwise the size of the tree needed to calculate the minimum would be way too big. 

In the pruning process, a very small improvement may be seen by using our proving scheme and trained model, as opposed to using the
benchmark heuristic strategy, to calculate the upper bounds. It is not essential to use this improvement, though. 

The tree extension, proof computation, propagation and pruning steps are repeated until the lower bound and upper bound
at the root vertex coincide, this is then the minimal value. For runtime reasons, we calculated separately the 
bounds for the $27$ vertices obtained after the first cut.

\bigskip

\noindent
{\em Warning:} my implementation of the above strategy of proof 
is not certified to be correct, so this should only be considered as an indication of
proof. These minimal values do agree with the smallest values found by the neural networks, so it seems likely that they are correct.  

\bigskip

To illustrate the procedure, 
and also to highlight what the the neural networks need to do to find a minimal proof,
here are the matrices giving numbers of nodes depending on the initial cut location $(x,y)$ at the root. The minimal value
for the instance $\sigma$ is then $\nu _{\rm min} = k+1$ where $k$ is the minimal value of the entries in the matrix. 
The $+1$ is for the root node
itself. 
$$
\sigma = 0 \;\;\; 
\left(
\begin{array}{ccc}
8 & 8  & 8 \\
8  & 8  & 8 \\
8  & 8  & 8 \\
\end{array}
\right)
\;\;\; \nu _ {\rm min} = 9
\qquad \qquad \qquad
\sigma = 1 \;\;\; 
\left(
\begin{array}{ccc}
34 & 26  & 26 \\
20 & 20 & 20 \\
20 & 20 & 20 \\
\end{array}
\right)
\;\;\;\nu _ {\rm min}= 21
$$
$$
\sigma = 2 \;\;\; 
\left(
\begin{array}{ccc}
30 & 29 & 28 \\
29 & 30 & 28 \\
22 & 22 & 22
\end{array}
\right)
\;\;\; \nu _ {\rm min} = 23
\qquad \qquad \qquad
\sigma = 3 \;\;\; 
\left(
\begin{array}{ccc}
36 & 36 & 36\\
36 & 36 & 36 \\
36 & 36 & 36
\end{array}
\right)
\;\;\; \nu _ {\rm min} = 37
$$
$$
\sigma = 4 \;\;\; 
\left(
\begin{array}{ccc}
18 & 13 & 13 \\
12 & 12 & 10 \\
12 & 10 & 12 \\
 \end{array}
\right)
\;\;\; \nu _ {\rm min} = 11
\qquad \qquad \qquad
\sigma = 5 \;\;\; 
\left(
\begin{array}{ccc}
4 & 6 & 7 \\
6 & 4 & 7 \\
5 & 5 & 5 \\
\end{array}
\right)
\;\;\; \nu _ {\rm min}= 5
$$
$$
\sigma = 6 \;\;\; 
\left(
\begin{array}{ccc}
2 & 3 & 5 \\
3 & 2 & 5 \\
3 & 3 & 3 \\
\end{array}
\right)
\;\;\; \nu _ {\rm min} = 3
\qquad \qquad \qquad
\sigma = 7 \;\;\; 
\left(
\begin{array}{ccc}
4 & 6 & 6 \\
6 & 4 & 5  \\
6 & 5 & 4
\end{array}
\right)
\;\;\; \nu _ {\rm min} = 5
$$
$$
\sigma = 8 \;\;\; 
\left(
\begin{array}{ccc}
2 & 3 & 3 \\
3 & 2 & 3 \\
3 & 3 & 2
\end{array}
\right)
\;\;\; \nu _ {\rm min}= 3
\qquad \qquad \qquad
\sigma = 9 \;\;\; 
\left(
\begin{array}{ccc}
14 & 14 & 13 \\
14 & 14 & 13\\
10 & 10 & 10
\end{array}
\right)
\;\;\; \nu _ {\rm min}= 11
$$
$$
\sigma = 10 \;\;\; 
\left(
\begin{array}{ccc}
2 & 3 & 3 \\
3 & 2 & 4\\
3 & 4 & 2
\end{array}
\right)
\;\;\; \nu _ {\rm min}= 3
\qquad \qquad \qquad
\sigma = 11 \;\;\; 
\left(
\begin{array}{ccc}
2 & 2 & 3 \\
2 & 2 & 3 \\
3 & 3 & 2
\end{array}
\right)
\;\;\; \nu _ {\rm min}= 3
$$
$$
\sigma = 12 \;\;\; 
\left(
\begin{array}{ccc}
16 & 16 & 16 \\
16 & 16 & 16 \\
16 & 16 & 16
\end{array}
\right)
\;\;\;\nu _ {\rm min} = 17
$$

Note that for $\sigma$ instances $0$, $3$ and $12$ the number of nodes doesn't depend on the first cut. 
Of course it depends
on subsequent cuts. 

Let us consider this question in further detail for the case $\sigma = 3$. 
Node counts depending on the first cut location are as follows:

\vspace*{0.4cm}

\hspace*{2cm}
\begin{tabular}{|c||c|c|c|}
\hline
& $y=0$ & $y=1$ & $y=2$ \\
\hline 
\hline
$x=0$ & $13 + 10 + 13$ & $14 + 10 + 12$ & $14 + 10 + 12$  \\
\hline
$x=1$ & $14 + 10 + 12$ & $13 + 10 + 13$  & $14 + 10 + 12$  \\
\hline
$x=2$ & $14 + 10 + 12$ & $14 + 10 + 12$ & $13 + 10 + 13$   \\
\hline
\end{tabular}

\vspace*{0.4cm}

\noindent
The table entries refer to the number of nodes at the values $p=0$, $p=1$ and $p=2$. Thus, 
$14+10+12$ indicates $14$ nodes for $p=0$, $10$ nodes for $p=1$ and $12$ nodes for $p=2$.
The table says, for example, that the count of nodes at $(x,y,p)=(0,0,0)$ is $13$, whereas
the count for $(x,y,p)=(1,2,0)$ is $14$. 

As pointed out before, the sums are $36$ independently of $(x,y)$. 
Adding one for the root node gives the desired value of $37$ 
nodes for the full proof at $\sigma = 3$.

Continue by looking at the node values for the next choice of cuts in
a sample case. We'll consider a node $v$ obtained at location $(x,y,p)$ from the first choice of cut.
Recall that a choice of cut is a choice of $(x,y)$ yielding in this case three vertices below corresponding to
$(x,y,0)$,$(x,y,1)$ and $(x,y,2)$.

Here are some matrices $(n_{x',y'})$ that give the number of nodes (but not including $1$ for $v$,
that is added into $\nu _{\rm min}$)
corresponding to a second cut $(x',y')$. The values $(x',y')$ corresponding to the previous cut are 
naturally unavailable. 
$$
\bullet\;
\mbox{vertex from first cut at}\; (0,0,0),\; \mbox{lower bounds for next cut: }
\left[
\begin{array}{ccc}
- & 12 & 12 \\
12 & 13 & 14 \\
12 & 14 & 13 \\
\end{array}
\right]
\;\;\; \nu_{\rm min}= 13
$$
$$
\bullet\;
\mbox{vertex from first cut at}\; (0,0,1),\; \mbox{lower bounds for next cut: }
\left[
\begin{array}{ccc}
- & 9 & 9 \\
9 & 9 & 9 \\
9 & 9 & 9 
\end{array}
\right]
\;\;\; \nu_{\rm min} = 10
$$
$$
\bullet\;
\mbox{vertex from first cut at}\; (0,0,2),\; \mbox{lower bounds for next cut: }
\left[
\begin{array}{ccc}
- & 14 & 14 \\
12 & 13 & 15 \\
12 & 15 & 13
\end{array}
\right]
\;\;\; \nu_{\rm min}= 13
$$

These are going to enter into the full minimum value at the cut location $(x,y)=(0,0)$. In the first and third cases,
the neural network has to choose correctly the next cut in order to get a minimal value.

The program that does the minimality 
proof of Theorem \ref{theormin} will obtain the analogous information at all nodes of the possible proof trees that aren't
discarded as not being in the running for minimal ones. I don't have a good method for visualizing all the information.
This completes our summary of the computations that go into the minimality proof. 
\end{proof}

\section{Addendum: The process function}
\label{appendix}

We record here the pytorch functions written to go into the {\em process} function of Section \ref{task}, the function 
that implements logical
consequences of the associativity axiom on positions of a classification proof. 
As well as being things that one should verify, 
these programs serve to illustrate how to use boolean tensor manipulations under pytorch to replace 
\verb+for...next+ loops, a technique used systematically in order to improve the computation speed. 

A few minor modifications are made for readability. The function \verb+arangeic+ is the function \verb+arange+, that is
to say the sequence of consecutive integer values starting from $0$ of the given length, placed
onto the required device (CPU or GPU if available). We note that \verb+prod+, \verb+left+, \verb+right+ and \verb+ternary+ 
are the tensors denoted $m$, $l$, $r$ and $t$ in Section \ref{task},
as contained in the position represented by a dictionary \verb+Data+ with \verb+length+ denoting the batchsize. 
Dimension $0$ of all tensors is the batch dimension. 

\noindent
{\bf 1.} \, In the \verb+modifyternaryStep+ we insert $0$ into \verb+ternary+ at location $x,y,z,i$ (for $x,y,z\in A$ and $i\in I^0$) whenever,
for all $p\in B^0$ such that \verb+prod+$(x,y,p) = 1$ we have \verb+right+$(p,z,i)=0$ and similarly using \verb+left+.

{\footnotesize
\begin{verbatim}
    def modifyternaryStep(self,Data):
        a = self.alpha
        bz = self.beta + 1
        #
        length = Data['length']
        prod = Data['prod']
        left = Data['left']
        right = Data['right']
        ternary = Data['ternary']
        #
        ivx = arangeic(length).view(length,1,1,1,1).expand(length,a,a,a,bz)
        xvx = arangeic(a).view(1,a,1,1,1).expand(length,a,a,a,bz)
        yvx = arangeic(a).view(1,1,a,1,1).expand(length,a,a,a,bz)
        zvx = arangeic(a).view(1,1,1,a,1).expand(length,a,a,a,bz)
        pvx = arangeic(bz).view(1,1,1,1,bz).expand(length,a,a,a,bz)
        #
        nter0_left = (prod[ivx,yvx,zvx,pvx] & left[ivx,xvx,pvx,0]).any(4)
        nter1_left = (prod[ivx,yvx,zvx,pvx] & left[ivx,xvx,pvx,1]).any(4)
        #
        nter0_right = (prod[ivx,xvx,yvx,pvx] & right[ivx,pvx,zvx,0]).any(4)
        nter1_right = (prod[ivx,xvx,yvx,pvx] & right[ivx,pvx,zvx,1]).any(4)
        #
        nter0v = (nter0_left & nter0_right)
        nter1v = (nter1_left & nter1_right)
        #
        newternary = ternary.clone()
        newternary[:,:,:,:,0] = ternary[:,:,:,:,0] & nter0v
        newternary[:,:,:,:,1] = ternary[:,:,:,:,1] & nter1v
        #
        NewData = self.rr1.copydata(Data)
        NewData['ternary'] = newternary.detach()
        #
        return NewData
\end{verbatim} 
}

\noindent
{\bf 2.} \, 
In the \verb+modifyleftrightStep+ we insert $0$ into \verb+right+ at location $p,z,i$ (for $p\in B^0$, $z\in A$ and $i\in I^0$) whenever
there exist $x,y\in A$ such that the product $x\cdot y$ is uniquely defined equal to $p$ and 
\verb+ternary+$(x,y,z,i)=0$, and similarly for \verb+left+. 

{\footnotesize
[Additional linebreaks are inserted at the \verb+nleft0+, \verb+nleft1+, \verb+nright0+, \verb+nright1+ lines below so it fits on the page.]}

{\footnotesize
\begin{verbatim}
    def modifyleftrightStep(self,Data):
        a = self.alpha
        bz = self.beta + 1
        #
        length = Data['length']
        prod = Data['prod']
        left = Data['left']
        right = Data['right']
        ternary = Data['ternary']
        #
        prodstats = prod.to(torch.int64).sum(3)
        unique = (prodstats == 1)
        #
        ivx = arangeic(length).view(length,1,1,1,1).expand(length,a,a,a,bz)
        xvx = arangeic(a).view(1,a,1,1,1).expand(length,a,a,a,bz)
        yvx = arangeic(a).view(1,1,a,1,1).expand(length,a,a,a,bz)
        zvx = arangeic(a).view(1,1,1,a,1).expand(length,a,a,a,bz)
        pvx = arangeic(bz).view(1,1,1,1,bz).expand(length,a,a,a,bz)
        #
        nleft0 = (( (~prod[ivx,yvx,zvx,pvx]) |  
                            (~unique[ivx,yvx,zvx]) | ternary[ivx,xvx,yvx,zvx,0]).all(3)).all(2)
        nleft1 = (( (~prod[ivx,yvx,zvx,pvx]) |  
                            (~unique[ivx,yvx,zvx]) | ternary[ivx,xvx,yvx,zvx,1]).all(3)).all(2)
        #
        nright0 = (( (~prod[ivx,xvx,yvx,pvx]) |  
                            (~unique[ivx,xvx,yvx]) | ternary[ivx,xvx,yvx,zvx,0]).all(2)).all(1)
        nright1 = (( (~prod[ivx,xvx,yvx,pvx]) |  
                            (~unique[ivx,xvx,yvx]) | ternary[ivx,xvx,yvx,zvx,1]).all(2)).all(1)
        #
        newleft = left.clone()
        newright = right.clone()
        #
        newleft[:,:,:,0] = left[:,:,:,0] & nleft0
        newleft[:,:,:,1] = left[:,:,:,1] & nleft1
        newright[:,:,:,0] = right[:,:,:,0] & (nright0.permute(0,2,1))
        newright[:,:,:,1] = right[:,:,:,1] & (nright1.permute(0,2,1))
        #
        NewData = self.rr1.copydata(Data)
        NewData['left'] = newleft.detach()
        NewData['right'] = newright.detach()
        #
        return NewData
\end{verbatim}
}

\noindent
{\bf 3.} \, 
In the \verb+modifyprodStep+ we insert $0$ into \verb+prod+ at location $x,y,p$ (for $x,y\in A$ and $p\in B^0$) whenever,
there exists $z\in A$ and $i\in I^0 = \{ 0,1\}$, such that \verb+right+$(p,z,i) = 0$ and \verb+ternary+$(x,y,z,(1-i))=0$ 
(we note that if those exist and if $x\cdot y = p$ then $x\cdot y \cdot z$ can't be either $i$ or $(1-i)$, ruling out that possibility
so $x\cdot y \neq p$). Similarly for \verb+left+.

{\footnotesize 
\begin{verbatim}   
    def modifyprodStep(self,Data):
        a = self.alpha
        bz = self.beta + 1
        #
        length = Data['length']
        prod = Data['prod']
        left = Data['left']
        right = Data['right']
        ternary = Data['ternary']
        #
        lvx = arangeic(length).view(length,1,1,1,1).expand(length,a,a,a,bz)
        xvx = arangeic(a).view(1,a,1,1,1).expand(length,a,a,a,bz)
        yvx = arangeic(a).view(1,1,a,1,1).expand(length,a,a,a,bz)
        zvx = arangeic(a).view(1,1,1,a,1).expand(length,a,a,a,bz)
        pvx = arangeic(bz).view(1,1,1,1,bz).expand(length,a,a,a,bz)
        #
        leftbin01 = (left[lvx,xvx,pvx,0] | ternary[lvx,xvx,yvx,zvx,1]) 
        leftbin10 = (left[lvx,xvx,pvx,1] | ternary[lvx,xvx,yvx,zvx,0])
        #
        rightbin01 = (right[lvx,pvx,zvx,0] | ternary[lvx,xvx,yvx,zvx,1])
        rightbin10 = (right[lvx,pvx,zvx,1] | ternary[lvx,xvx,yvx,zvx,0])
        #
        newprod = prod.clone()
        newprod = newprod & ( (leftbin01 & leftbin10).all(1) )
        newprod = newprod & ( (rightbin01 & rightbin10).all(3) )
        #
        NewData = self.rr1.copydata(Data)
        NewData['prod'] = newprod.detach()
        #
        return NewData
\end{verbatim}
}

\noindent
{\bf 4.} \,        
These functions, serving to add some additional $0$'s to our tensors due to the associativity 
axiom, are put together in the \verb+process+ function. The text below contains some previously defined things with relatively 
self-explanatory names for which we refer to the program source.  Note that the process is repeated until no new $0$'s
are found (measured by \verb+knowledge+), and we do the repetition on subsets of the batch in order to save computation time (the batch size might
start out as $500$ but maybe only a few locations require multiple iterations of the process).

{\footnotesize
\begin{verbatim}        
    def process(self,Data):
        length = Data['length']
        if length == 0:
            return Data
        #
        OutputData = self.rr1.copydata(Data)
        nprod = Data['prod']
        nprodstats = nprod.to(torch.int64).sum(3)
        subset = ((nprodstats > 0).all(2)).all(1)
        NextData = self.rr1.detectsubdata(Data,subset)
        if subset.to(torch.int).sum(0) == 0:
            return OutputData
        for i in range(1000):
            priorknowledge = self.rr1.knowledge(NextData)
            #
            NextData = self.modifyternaryStep(NextData)
            #
            NextData = self.modifyleftrightStep(NextData)
            #
            NextData = self.modifyprodStep(NextData)
            #
            nextknowledge = self.rr1.knowledge(NextData)
            nextdonedetect = (priorknowledge >= nextknowledge)
            subset_nextdone = composedetections(length,subset,nextdonedetect)
            NextDoneData = self.rr1.detectsubdata(NextData,nextdonedetect)
            OutputData = self.rr1.insertdata(OutputData,subset_nextdone,NextDoneData)
            #
            subset = subset & (~subset_nextdone)
            if subset.to(torch.int).sum(0) == 0:
                break
            NextData = self.rr1.detectsubdata(NextData, ~nextdonedetect )
        return OutputData
\end{verbatim}
}

\noindent
The program source is available at 
\verb+https://github.com/carlostsimpson/sg-learn+
\newline
and might be bundled with this preprint.

\

\

\bigskip

\noindent
Carlos Simpson, {\sc CNRS, Universit\'e C\^ote d'Azur, LJAD}
\newline
carlos.simpson@univ-cotedazur.fr
\newline
Nice, France

\end{document}